%% file: main.tex
\documentclass[sigconf]{acmart}
\usepackage{subfigure}
\usepackage{multirow}
\usepackage{pifont}
\usepackage{bbding}
\usepackage{adjustbox}
\usepackage{threeparttable}
\usepackage{colortbl}

\usepackage{bbm}

\usepackage{amsthm}
\newtheorem{definition}{Definition}

\newtheorem{lemma}{Lemma}
\newtheorem{theorem}{Theorem}

\newcommand{\vpara}[1]{\vspace{0.05in}\noindent\textbf{#1 }}

\AtBeginDocument{%
  }

\copyrightyear{2026}
\acmYear{2026}
\setcopyright{cc}
\setcctype{by-nc-nd}
\acmConference[WWW '26]{Proceedings of the ACM Web Conference 2026}{April 13--17, 2026}{Dubai, United Arab Emirates}
\acmBooktitle{Proceedings of the ACM Web Conference 2026 (WWW '26), April 13--17, 2026, Dubai, United Arab Emirates}
\acmPrice{}
\acmDOI{10.1145/3774904.3792454}
\acmISBN{979-8-4007-2307-0/2026/04}

\begin{document}

\title{Mitigating Homophily Disparity in Graph Anomaly Detection: \\A Scalable and Adaptive Approach}

\author{Yunhui Liu}
\orcid{0009-0006-3337-0886}
\affiliation{
\institution{State Key Laboratory for Novel Software Technology\\ Nanjing University}
\city{Nanjing}
\country{China}}
\email{lyhcloudy1225@gmail.com}

\author{Qizhuo Xie}
\orcid{0009-0002-5553-1768}
\affiliation{
\institution{State Key Laboratory for Novel Software Technology\\ Nanjing University}
\city{Nanjing}
\country{China}}
\email{mikumifa@gmail.com}

\author{Yinfeng Chen}
\orcid{0009-0005-6873-8786}
\author{Xudong Jin}
\orcid{0009-0000-0306-4382}
\affiliation{
\institution{State Key Laboratory for Novel Software Technology\\ Nanjing University}
\city{Nanjing}
\country{China}}

\author{Tao Zheng}
\orcid{0009-0001-3736-4604}
\affiliation{
\institution{State Key Laboratory for Novel Software Technology\\ Nanjing University}
\city{Nanjing}
\country{China}}
\email{zt@nju.edu.cn}

\author{Bin Chong}
\authornote{Corresponding authors.}
\orcid{0000-0002-8741-178X}
\affiliation{
\institution{National Engineering Laboratory for Big Data Analysis and Applications\\ Peking University}
\city{Beijing}
\country{China}}
\email{chongbin@pku.edu.cn}

\author{Tieke He}
\authornotemark[1]
\orcid{0000-0001-9649-1796}
\affiliation{
\institution{State Key Laboratory for Novel Software Technology\\ Nanjing University}
\city{Nanjing}
\country{China}}
\email{hetieke@gmail.com}

\renewcommand{\shortauthors}{Yunhui Liu et al.}

\begin{abstract}
Graph anomaly detection (GAD) aims to identify nodes that deviate from normal patterns in structure or features.
While recent GNN-based approaches have advanced this task, they struggle with two major challenges: 1) homophily disparity, where nodes exhibit varying homophily at both class and node levels; and 2) limited scalability, as many methods rely on costly whole-graph operations. 
To address them, we propose SAGAD, a Scalable and Adaptive framework for GAD.
SAGAD precomputes multi-hop embeddings and applies reparameterized Chebyshev filters to extract low- and high-frequency information, enabling efficient training and capturing both homophilic and heterophilic patterns.
To mitigate node-level homophily disparity, we introduce an Anomaly Context-Aware Adaptive Fusion, which adaptively fuses low- and high-pass embeddings using fusion coefficients conditioned on Rayleigh Quotient-guided anomalous subgraph structures for each node.
To alleviate class-level disparity, we design a Frequency Preference Guidance Loss, which encourages anomalies to preserve more high-frequency information than normal nodes.
SAGAD supports mini-batch training, achieves linear time and space complexity, and drastically reduces memory usage on large-scale graphs.
Theoretically, SAGAD ensures asymptotic linear separability between normal and abnormal nodes under mild conditions. Extensive experiments on 10 benchmarks confirm SAGAD's superior accuracy and scalability over state-of-the-art methods. 
Our code is publicly available at \url{https://github.com/Cloudy1225/SAGAD}. 
\end{abstract}

\begin{CCSXML}
<ccs2012>
   <concept>
       <concept_id>10010147.10010257</concept_id>
       <concept_desc>Computing methodologies~Machine learning</concept_desc>
       <concept_significance>500</concept_significance>
       </concept>
 </ccs2012>
\end{CCSXML}

\ccsdesc[500]{Computing methodologies~Machine learning}

\keywords{Graph Anomaly Detection, Graph Neural Networks, Scalability, Homophily}


\maketitle

\section{Introduction}\label{Sec: Introduction}
With the explosive growth of Web information, graph anomaly detection (GAD), which aims to identify abnormal nodes that deviate significantly from the majority~\cite{GADSurvey}, has become a critical research direction due to its broad applications such as financial fraud detection~\cite{PMP}, malicious behavior identification in social networks~\cite{RLA}, and risk monitoring on e-commerce platforms~\cite{GAS}.

\begin{figure*}[ht]
    \centering
    \subfigure[Weibo]{\includegraphics[width=0.245\linewidth]{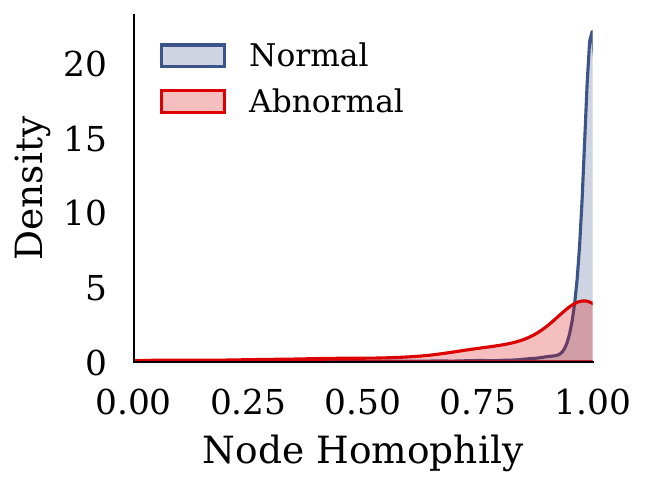}\label{Fig: Homo_Disparity_Weibo}}
    \subfigure[T-Finance]{\includegraphics[width=0.245\linewidth]{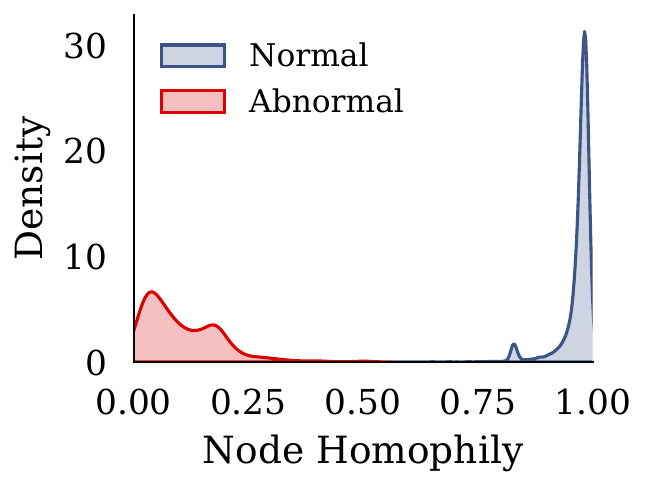}\label{Fig: Homo_Disparity_Tfinance}}
    \subfigure[Weibo]{\includegraphics[width=0.245\linewidth]{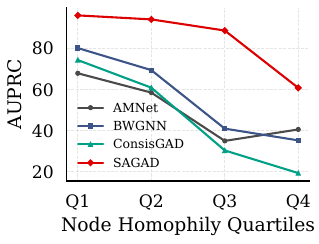}\label{Fig: Perf_Disparity_Weibo}}
    \subfigure[T-Finance]{\includegraphics[width=0.245\linewidth]{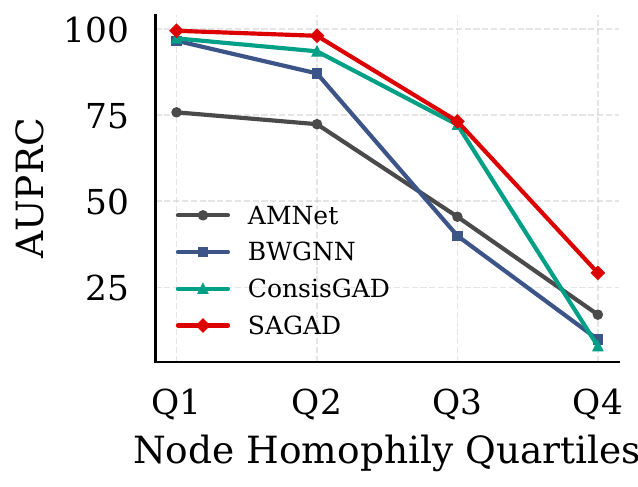}\label{Fig: Perf_Disparity_Tfinance}}
    \caption{(a), (b): Distribution of node homophily on Weibo and T-Finance. (c), (d): Performance disparity across node homophily quartiles (Q1 = top 25\% homophily, Q4 = bottom 25\%) on Weibo and T-Finance.}
    \label{Fig: Homo_Perf_Disparity}
\end{figure*}

Due to their strong capability to capture structural information, graph neural network (GNN)-based approaches have demonstrated significant advancements in GAD. 
A key challenge extensively studied in this domain is the presence of heterophilic connections (i.e., abnormal-normal edges), where anomalies tend to obscure their identity by forming connections with many normal nodes~\cite{CARE-GNN}. 
Under such circumstances, traditional GNNs struggle to distinguish abnormal nodes from normal ones, as message passing~\cite{GCN} tends to smooth node representations, thereby diluting anomaly signals and leading to suboptimal detection performance.
To mitigate this issue, various GNN-based strategies have been proposed. These can be broadly categorized into two families: spatial-centric and spectral-centric. 
Spatial-centric approaches aim to suppress noise from heterophilic neighbors during aggregation by leveraging trainable or predefined mechanisms, such as neighbor resampling~\cite{PCGNN}, edge perturbation~\cite{GHRN}, or message partitioning~\cite{PMP}.
Spectral-centric approaches~\cite{AMNet, BWGNN, DSGAD}, on the other hand, focus on designing GNNs equipped with spectral filters that operate across different frequency bands to simultaneously capture the spectral characteristics of both normal and abnormal nodes.
While both strategies have achieved promising results, they still suffer from two critical limitations that hinder their applicability in practice.

\textbf{Homophily Disparity.} 
Existing methods primarily analyze global homophily, i.e., the overall proportion of heterophilic edges in the graph. 
Their designs are essentially one-size-fits-all, applying global operations without accounting for the diverse homophily distributions across nodes. 
As illustrated in Figures~\ref{Fig: Homo_Disparity_Weibo} and~\ref{Fig: Homo_Disparity_Tfinance}, we provide a more granular analysis of homophily patterns in GAD graphs and uncover two distinct levels of homophily disparity: 
(i) class-level disparity, where abnormal nodes typically exhibit significantly lower homophily than normal nodes; 
and (ii) node-level disparity, where local node homophily varies widely from node to node. 
These disparities pose additional challenges for anomaly detection. 
While prior methods do consider heterophily, they typically rely on a uniform design based on a global homophily assumption, lacking \emph{node-adaptive mechanisms} required to accommodate local structural diversity. 
As shown in Figures~\ref{Fig: Perf_Disparity_Weibo} and~\ref{Fig: Perf_Disparity_Tfinance}, the performance of state-of-the-art models varies substantially across node homophily quartiles, with low-homophily nodes being detected much less accurately than high-homophily ones. 
This gap underscores the urgent need for adaptive designs that explicitly account for homophily disparity.

\textbf{Limited Scalability.} 
Scalability has long been a central challenge in GNN research. 
Modern Web-scale graphs often contain millions of nodes and edges, easily surpassing the memory limits of GPUs and rendering many existing GAD models impractical. 
The challenge is particularly pronounced for GAD because heterophily-oriented designs, such as edge perturbations or learnable spectral filters, commonly depend on non-local operations over the entire graph. 
Such whole-graph dependencies cause both computational time and memory costs to grow rapidly with graph size. 
Although one might reduce hidden dimensions aggressively (e.g., setting them to 8) to enable training on a million-scale dataset such as T-Social (5,781,065 nodes, 73,105,508 edges),this significantly compromises performance, as shown in Table~\ref{Tab: AUPRC}. 
Therefore, it is crucial to develop GAD models that are scalable to large graphs while retaining the capability for anomaly detection.

To tackle these challenges, we propose SAGAD, a Scalable and Adaptive Graph Anomaly Detection framework. 
SAGAD emphasizes simplicity and scalability by decoupling graph structure from iterative computations and relying solely on a set of precomputed node embeddings. 
Specifically, we perform multi-hop graph propagation as a preprocessing step and cache the resulting embeddings. 
These embeddings are then reweighted using two sets of reparameterized Chebyshev polynomial coefficients to generate low- and high-pass embeddings, emphasizing low- and high-frequency information conducive to homophily and heterophily, respectively.
To mitigate node-level homophily disparity, we design an Anomaly Context-aware Adaptive Fusion (ACAF) that adaptively combines low- and high-pass embeddings for each node. 
ACAF incorporates the input attributes and Rayleigh Quotient-guided anomalous subgraph structures of each node to generate personalized fusion coefficients. 
To further alleviate class-level homophily disparity, we introduce a Frequency Preference Guidance Loss, which encourages abnormal nodes to retain higher-frequency information compared to normal nodes. 
On the efficiency side, SAGAD supports straightforward minibatching, enjoys lower training time, and achieves superior memory utilization compared with existing models.
Theoretically, we show that under mild conditions, SAGAD achieves linear separability between both normal and abnormal nodes. 
Extensive experiments on 10 benchmark datasets validate the effectiveness and scalability of SAGAD, establishing new state-of-the-art performance on both small and large-scale graphs. 



\section{Related Work}

\subsection{Graph Anomaly Detection}
GNN-based approaches have emerged as a promising paradigm for GAD, owing to their powerful capability to model both structural and attribute information~\cite{GADSurvey}. 
Unsupervised GAD methods~\cite{DOMINANT, CoLA, TAM, DiffGAD, SmoothGNN} typically operate under the assumption that no label information is available and detect anomalies via reconstruction, contrastive learning, etc. 
In contrast, supervised GAD methods formulate the task as a binary node classification problem and design specialized GNNs to enhance performance. 
For instance, AMNet~\cite{AMNet} introduces a constrained Bernstein polynomial parameterization to model filters across multiple frequency groups. 
BWGNN~\cite{BWGNN} employs a Beta graph wavelet to construct band-pass filters that highlight anomaly-relevant signals. 
Models such as CARE-GNN~\cite{CARE-GNN}, PCGNN~\cite{PCGNN}, and GHRN~\cite{GHRN} focus on selectively pruning inter-class edges, either via neighbor distribution analysis or graph spectral heuristics. 
PMP~\cite{PMP} proposes a partitioned message-passing scheme to separately process homophilic and heterophilic neighborhoods. 
To address label scarcity, ConsisGAD~\cite{ConsisGAD}, GGAD~\cite{GGAD}, SpaceGNN~\cite{SpaceGNN}, and APF~\cite{APF} explore strategies such as consistency training, data synthesis, multiple latent space modeling, and pre-training with fine-tuning, respectively. 
GADBench~\cite{GADBench} offers a comprehensive benchmark for supervised GAD, along with a strong tree-based model, XGBGraph. Our proposed method builds upon GADBench and further addresses the challenges of homophily disparity and scalability in GAD.

\subsection{Heterophilic Graph Neural Networks}
A growing body of work has focused on designing GNNs that perform well under heterophily, where connected nodes often have dissimilar labels or features. 
H2GCN~\cite{H2GCN} identifies key architectural choices (such as residual ego embeddings and higher-order neighbors) that help alleviate the negative effects of heterophilic edges. 
FAGCN~\cite{FAGCN} introduces a self-gating mechanism to adaptively fuse signals across different frequency components during message passing. 
ACM~\cite{ACM} approaches heterophily from a post-aggregation similarity perspective and dynamically balances aggregation, diversification, and identity mappings within each GNN layer. 
AdaGNN~\cite{AdaGNN}, BernNet~\cite{BernNet}, and ChebNetII~\cite{ChebNet2} design polynomial spectral filters to capture task-relevant frequency information more flexibly.
In the GAD setting, several works have also incorporated ideas to mitigate the impact of heterophilic connections~\cite{AMNet, BWGNN, GHRN, PMP, DSGAD}. 
While these methods do consider the presence of heterophily, they fall short in addressing the node-level and class-level homophily disparity that often arises in real-world GAD tasks. 
Our work directly targets this gap by designing a model that adaptively resolves homophily disparity while maintaining scalability.

\begin{figure*}[ht]
\centerline{\includegraphics[width=1.\linewidth]{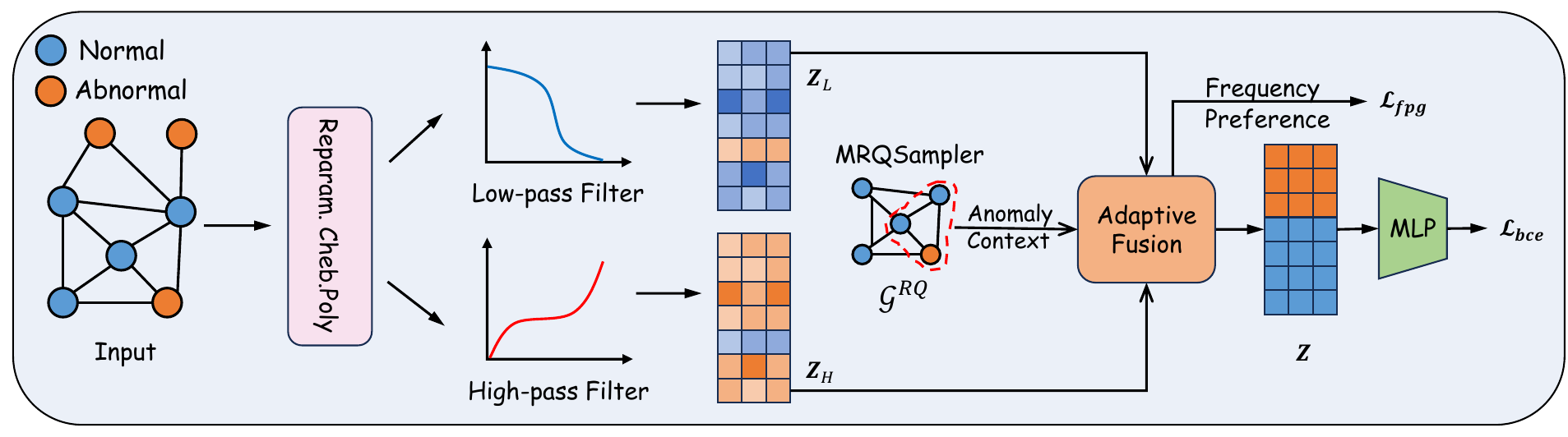}}
\caption{Overview of our proposed SAGAD. 
It consists of three main designs: 1) Dual-pass Chebyshev Polynomial Filter extracts both low- and high-frequency embeddings to capture homophilic and heterophilic patterns; 2) Anomaly Context-aware Adaptive Fusion dynamically integrates these embeddings based on node-specific structural contexts; 3) Frequency Preference Guidance Loss regularizes fusion weights to align with class-specific spectral preferences, enhancing anomaly discrimination.
} 
\label{Fig: Framework}
\end{figure*}

\section{Preliminaries}
\vpara{Graph Anomaly Detection.}
Let $\mathcal{G} = (\mathcal{V}, \mathcal{E}, \boldsymbol{X})$ denote a graph with $n$ nodes $\mathcal{V}$, $m$ edges $\mathcal{E}$, and node features $\boldsymbol{X} \in \mathbb{R}^{n \times d}$. 
Let $\boldsymbol{A} \in \mathbb{R}^{n \times n}$ denote the adjacency matrix, and $\boldsymbol{D}$ be the diagonal degree matrix. 
The set of neighbors for node $v_i$ is denoted by $\mathcal{N}_i$.
GAD is formulated as a binary classification task, where anomalies are treated as the positive class (label 1). Given a set of labeled nodes $\mathcal{V}^L = \mathcal{V}_a \cup \mathcal{V}_n$ with corresponding labels $\boldsymbol{y}^L$, the objective is to predict the labels $\boldsymbol{\hat{y}}^U$ for the unlabeled nodes.

\vpara{Graph Spectral Filtering.}
Define graph Laplacian as $\boldsymbol{L}=\boldsymbol{D}-\boldsymbol{A}$ and the symmetric normalized version as $\tilde{\boldsymbol{L}} = \boldsymbol{I} - \boldsymbol{D}^{-1/2}\boldsymbol{A}\boldsymbol{D}^{-1/2}$. 
The normalized Laplacian can be decomposed as $\tilde{\boldsymbol{L}} = \boldsymbol{U} \boldsymbol{\Lambda} \boldsymbol{U}^\top$, where $\boldsymbol{U} \in \mathbb{R}^{n \times n}$ contains the orthonormal eigenvectors (graph Fourier basis), and $\boldsymbol{\Lambda} = \mathrm{diag}(\lambda_1, \lambda_2, \dots, \lambda_n)$ is a diagonal matrix of eigenvalues (frequencies), satisfying $0 = \lambda_1 \le \lambda_2 \le \dots \le \lambda_n \le 2$.
Given features $\boldsymbol{X}$ and a filter function $g(\cdot)$, the filtered features are computed as: $\boldsymbol{\hat{X}} = \boldsymbol{U} g(\boldsymbol{\Lambda}) \boldsymbol{U}^\top \boldsymbol{X}$. 
Recent studies~\cite{BernNet, ChebNet2} suggest using polynomials to approximate $g(\boldsymbol{\Lambda})$ by $K$-order truncation can fit $g(\boldsymbol{\Lambda})$ of any shape and avoid $O(n^3)$ complexity of eigendecomposition.

\vpara{Homophily and Heterophily.}\label{APP: Homophily and Heterophily}
Homophily refers to the tendency of edges to connect nodes with the same label, whereas heterophily describes the opposite phenomenon. 
There are three commonly used homophily metrics: edge, node, and class homophily.

\textbf{Edge homophily}~\cite{H2GCN} is defined as the fraction of intra-class edges in the graph: 
\begin{equation}
    h=\frac{|e_{ij} \in \mathcal{E}: y_i=y_j|}{|\mathcal{E}|}.
\end{equation}

\textbf{Node homophily}~\cite{Geom-GCN} measures the fraction of a node's neighbors that share the same label: 
\begin{equation}
    h_i = \frac{|j\in\mathcal{N}_i: y_i=y_j|}{|\mathcal{N}_i|}. 
\end{equation}

\textbf{Class homophily}~\cite{ACM} here computes the average node homophily for abnormal and normal nodes, respectively:
\begin{equation}
    h^a = \frac{\sum_{y_i=1}h_i}{\sum_{y_i=1} 1}, \quad h^n = \frac{\sum_{y_i=0}h_i}{\sum_{y_i=0} 1}.
\end{equation}
Note that all these metrics lie within the range $[0,1]$. 
Graphs or nodes with a higher proportion of intra-class edges exhibit greater homophily. 
In Table~\ref{Tab: Dataset statistics}, we provide the values of $h$, $h^a$, and $h^n$ for each dataset. 
As presented, $h^a$ is much smaller than $h^n$, indicating that abnormal nodes exhibit significantly lower homophily compared to normal ones.

\section{Methodology}
In this section, we formally introduce our SAGAD framework, which consists of three key components: 1) a Dual-pass Chebyshev Polynomial Filter that jointly captures low- and high-frequency signals, 2) an Anomaly Context-aware Adaptive Fusion mechanism that selectively combines frequency components based on each node’s local structural context, and 3) a Frequency Preference Guidance Loss that encodes class-specific spectral biases to further enhance discriminative capability. 
The overall framework is illustrated in Figure~\ref{Fig: Framework}. 
The following subsections detail each component of our framework.

\subsection{Dual-pass Chebyshev Polynomial Filter}
As discussed in the Introduction, the relation-camouflage effect of anomalies often leads to higher heterophily compared to normal nodes~\cite{CARE-GNN}. 
Recent studies~\cite{FAGCN, ACM} have demonstrated that high-frequency information plays a crucial role in capturing heterophilic structures. 
Motivated by this, we propose to complement the conventional low-pass filter with an additional high-pass filter to enhance the encoding of anomaly-related heterophilic patterns. 
To enable expressive and scalable spectral filtering, we adopt the $K$-order Chebyshev polynomial approximation with interpolation~\cite{ChebNet2, PolyGCL}. 
The general formulation is given by $\sum_{k=0}^{K} w_k T_k(\hat{\boldsymbol{L}}) X$, where $T_k(\cdot)$ denotes the $k$-th Chebyshev polynomial, $\hat{\boldsymbol{L}} = 2\tilde{L}/{\lambda_{\max}} - I$ is the scaled Laplacian, and $w_k$ is the filter coefficient. 
Each $w_k$ is reparameterized using interpolation on Chebyshev nodes as:
\begin{equation}\label{Eq: w_k}
    w_k = \frac{2}{K+1} \sum_{j=0}^{K} \gamma_j T_k(x_j),
\end{equation}
where $x_j = \cos\left(\frac{j + 1/2}{K+1} \pi\right)$ for $j = 0, \ldots, K$ are Chebyshev nodes associated with $T_{K+1}$, and $\gamma_j$ is a learnable parameter representing the filter value at node $x_j \in [-1, 1]$~\cite{ChebNet2}. 
To explicitly model high- and low-pass filters, we impose monotonicity constraints on the filter shape through reparameterization. 
Specifically, assuming non-negative filter functions as in~\cite{BernNet}, we define the high-pass filter using a prefix sum over non-negative learnable parameters and the low-pass filter using a prefix difference~\cite{PolyGCL}:
\begin{equation}\label{Eq: gamma_i}
    \gamma_{i}^{H}=\sum_{j=0}^{i}\gamma_{j},\quad\gamma_{i}^{L}=\gamma_{0}-\sum_{j=1}^{i}\gamma_{j},i=1,\ldots,K,
\end{equation}
with $\gamma_0^H = \gamma_0^L = \gamma_0$. 
This ensures that the filter responses $f(\hat{\lambda})$ are monotonically increasing for the high-pass filter ($\gamma_i^H \leq \gamma_{i+1}^H$) and monotonically decreasing for the low-pass filter ($\gamma_i^L \geq \gamma_{i+1}^L$), thereby satisfying their respective spectral properties over $\hat{\lambda} \in [-1, 1]$.
Based on this reparameterization, we define the dual-pass Chebyshev filters as:
\begin{equation}\label{Eq: Z_L_H}
    \boldsymbol{Z}_L = \sum_{k=0}^{K} w_k^L T_k(\hat{\boldsymbol{L}}) X, \quad \boldsymbol{Z}_H = \sum_{k=0}^{K} w_k^H T_k(\hat{\boldsymbol{L}}) X,
\end{equation}
where $w_k^L$ and $w_k^H$ are computed by substituting $\gamma$ in Eq.~\eqref{Eq: w_k} with $\gamma^L$ and $\gamma^H$ from Eq.~\eqref{Eq: gamma_i}, respectively. The resulting node embeddings $\boldsymbol{Z}_L, \boldsymbol{Z}_H \in \mathbb{R}^{n \times d}$ correspond to low- and high-pass embeddings.
Importantly, the Chebyshev basis terms $T_k(\hat{\boldsymbol{L}}) X$ can be efficiently precomputed and cached via iterative sparse matrix multiplication. 
This design enables mini-batch training without the need to process the entire graph during training, thereby significantly improving scalability for large-scale graphs.

\subsection{Anomaly Context-aware Adaptive Fusion}
After obtaining the low- and high-pass node embeddings, a straightforward approach for GAD is to simply concatenate or average them~\cite{BWGNN}, followed by applying a classification head. 
However, such one-size-fits-all strategies fail to capture the complex interplay between homophilic and heterophilic patterns, especially in the presence of the contrast between normal and anomalous nodes. 
This motivates us to design a node-adaptive fusion strategy that dynamically accounts for node-level homophily disparity. 
Furthermore, prior works~\cite{AdaGNN, GFS} have shown that different feature dimensions contribute unequally to downstream tasks, suggesting that dimension-aware fusion is also crucial. 
To achieve both node- and dimension-wise adaptivity, we introduce a coefficient matrix $\boldsymbol{C} \in \mathbb{R}^{n \times d}$ to fuse the low- and high-pass embeddings as follows:
\begin{equation}\label{Eq: Fusion}
    \boldsymbol{Z} = \boldsymbol{C} \odot \boldsymbol{Z}_L + (1 - \boldsymbol{C}) \odot \boldsymbol{Z}_H,
\end{equation}
where $\odot$ denotes the element-wise product, and $\boldsymbol{Z}$ represents the fused node embeddings. 

A naive method for learning $\boldsymbol{C}$ is to use an MLP that takes node features as input~\cite{NFGNN, MGFNN}. 
However, nodes with diverse structural contexts may share similar features, rendering this method ineffective. 
To address this, we propose incorporating each node's local structural context (e.g., $k$-hop subgraph) to guide the learning of $\boldsymbol{C}$. 
Nonetheless, directly using a full $k$-hop neighborhood may introduce irrelevant entities and relations, thereby diluting the anomaly signal. 
This raises the question: \emph{What constitutes a suitable subgraph for GAD?} 
Recent studies~\cite{BWGNN, GHRN} have observed a right-shift phenomenon in the spectral energy of abnormal nodes, indicating a shift from low to high frequencies. 
This spectral energy accumulation can be measured using the Rayleigh Quotient~\cite{RQ}, defined as:
\begin{equation}\label{Eq: RQ}
    RQ(\boldsymbol{x}, \boldsymbol{L}) = \frac{\boldsymbol{x}^\top \boldsymbol{L} \boldsymbol{x}}{\boldsymbol{x}^\top \boldsymbol{x}} = \frac{\sum_{i=1}^{n} \lambda_i \hat{x}_i^2}{\sum_{i=1}^{n} \hat{x}_i^2} = \frac{\sum_{i,j} A_{ij}(x_j - x_i)^2}{\sum_{i=1}^{n} x_i^2},
\end{equation}
where $\boldsymbol{x} \in \mathbb{R}^{n}$ is a graph signal and $\hat{\boldsymbol{x}} = \boldsymbol{U}^\top \boldsymbol{x}$ is the spectrum of $\boldsymbol{x}$. 
By definition, the Rayleigh Quotient captures the spectral energy and reflects structural diversity. 
Specifically, the presence of relation camouflage, a key characteristic of anomalies, tends to increase the Rayleigh Quotient. 
This connection is summarized by the following lemma~\cite{BWGNN}:
\begin{lemma}
    The Rayleigh Quotient $RQ(\boldsymbol{x}, \boldsymbol{L})$, which quantifies the accumulated spectral energy of a graph signal, is monotonically increasing with the node's anomaly degree.
\end{lemma}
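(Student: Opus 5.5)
This lemma is taken from BWGNN~\cite{BWGNN}, so the first task is to fix a precise model of ``anomaly degree''. Following that work, I would model the graph signal as $\boldsymbol{x}$ with entries drawn i.i.d.\ (conditioned on the graph) from a Gaussian $\mathcal{N}(\mu, \sigma^2)$. Here the anomaly degree is the coefficient of variation $\sigma/|\mu|$, or equivalently the ratio $\sigma^2/\mu^2$. Anomalies in this model are large deviations that inflate the variance relative to the mean. The claim then becomes: $\mathbb{E}[RQ(\boldsymbol{x},\boldsymbol{L})]$ is monotonically increasing in $\sigma^2/\mu^2$. Strictly speaking, we will use a ratio-of-expectations surrogate, or concentration for large $n$.

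The core computation uses the last form in Eq.~\eqref{Eq: RQ}. For the numerator, write $\boldsymbol{x} = \mu\mathbf{1} + \sigma\boldsymbol{\epsilon}$. Since $\boldsymbol{L}\mathbf{1}=0$, we get
\[
\boldsymbol{x}^\top\boldsymbol{L}\boldsymbol{x} = \sigma^2\boldsymbol{\epsilon}^\top\boldsymbol{L}\boldsymbol{\epsilon},
\]
whose expectation is $\sigma^2\,\mathrm{tr}(\boldsymbol{L}) = \sigma^2\sum_i d_i = 2m\sigma^2$. The mean drops out entirely. For the denominator, $\mathbb{E}[\boldsymbol{x}^\top\boldsymbol{x}] = n(\mu^2+\sigma^2)$. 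Thus the surrogate is
\[
\frac{2m\sigma^2}{n(\mu^2+\sigma^2)} = \frac{2m}{n}\cdot\frac{\sigma^2/\mu^2}{1+\sigma^2/\mu^2},
\]
and $t\mapsto t/(1+t)$ is strictly increasing on $t\ge 0$. This gives the monotonicity.

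To make the argument rigorous rather than heuristic, I would either bound $\mathbb{E}[\text{num}/\text{den}]$ directly or invoke concentration. For concentration, both the numerator and the denominator are sums of quadratic forms in Gaussians, so by Hanson--Wright they concentrate around their means at relative scale $O(1/\sqrt{n})$ when the graph has bounded degree. The ratio therefore converges to the surrogate, and monotonicity holds asymptotically. BWGNN instead proves the statement directly for the expectation, via an integral representation $1/D = \int_0^\infty e^{-tD}\,dt$ together with Gaussian moment generating functions. I would follow that route if an exact finite-$n$ statement is needed.

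I expect this finite-$n$ monotonicity of the true expectation to be the main obstacle. The expected ratio depends on $\mu$ and $\sigma$ jointly, not only through $\sigma^2/\mu^2$, so some care is needed. The approach is to normalize $\mu=1$ by scale invariance of $RQ$ and then show the derivative in $\sigma$ is positive. My first attempt would be a coupling argument: increasing $\sigma$ while holding $\boldsymbol{\epsilon}$ fixed makes $RQ(\mathbf{1}+\sigma\boldsymbol{\epsilon})$ pointwise nondecreasing in $\sigma$. This holds because it equals $\sigma^2 a/(\|\mathbf{1}\|^2 + 2\sigma b + \sigma^2 c)$ with $a,c\ge 0$. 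Its derivative has the sign of $\sigma(\|\mathbf{1}\|^2+\sigma b)$, which is nonnegative except on the event $b<-n/\sigma$. That exceptional event would have to be handled by symmetry of $\boldsymbol{\epsilon}\mapsto-\boldsymbol{\epsilon}$.
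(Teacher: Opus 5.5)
The paper itself contains no proof of this lemma; it is quoted from BWGNN, so your argument has to stand on its own. Your formalization is the standard one: $\boldsymbol{x}\sim\mathcal{N}(\mu\mathbf{1},\sigma^2\boldsymbol{I})$, anomaly degree $\sigma/|\mu|$, and a claim about $\mathbb{E}[RQ(\boldsymbol{x},\boldsymbol{L})]$. Your ratio-of-expectations value $2m\sigma^2/(n(\mu^2+\sigma^2))$ is also correct. But that surrogate is not $\mathbb{E}[RQ]$. The Hanson--Wright step only pins $\mathbb{E}[RQ]$ to the surrogate up to an $O(n^{-1/2})$ error, so it orders two anomaly degrees only when their surrogate values differ by more than that error. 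Your worry that the law of $RQ$ depends on $\mu$ and $\sigma$ separately is unfounded: scale invariance and the symmetry of $\boldsymbol{\epsilon}$ show it depends only on $\sigma/|\mu|$, which is what you then use. So the statement rests entirely on your exact finite-$n$ step, and that step does not go through as proposed.

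To see why, split $\boldsymbol{\epsilon}$ into $z=\mathbf{1}^\top\boldsymbol{\epsilon}/\sqrt{n}\sim\mathcal{N}(0,1)$ and $\boldsymbol{\delta}=\boldsymbol{\epsilon}-z\mathbf{1}/\sqrt{n}$, which is independent of $z$. With $\mu=1$, $s=\sigma$ and $\tau=\sqrt{n}/s$,
\[
RQ(\mathbf{1}+s\boldsymbol{\epsilon},\boldsymbol{L})=\frac{a}{(\tau+z)^2+r},\qquad a=\boldsymbol{\delta}^\top\boldsymbol{L}\boldsymbol{\delta},\quad r=\|\boldsymbol{\delta}\|^2 .
\]
Your exceptional event $b<-n/\sigma$ is exactly $z<-\tau$. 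Since $a$ and $r$ are even in $\boldsymbol{\delta}$, the flip $\boldsymbol{\epsilon}\mapsto-\boldsymbol{\epsilon}$ acts only as $z\mapsto-z$. The paired integrand $a\bigl[((\tau+z)^2+r)^{-1}+((\tau-z)^2+r)^{-1}\bigr]$ is still not monotone in $s$. Whenever $a>0$ and $z>\sqrt{r}$, it is at least $a/r$ at $\tau=z$, but it tends to $2a/(z^2+r)<a/r$ as $s\to\infty$. The decreasing term sits over the smaller denominator, so its mirror image cannot compensate, and no sample-by-sample argument closes.

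The missing idea is to move the parameter out of the integrand and into the Gaussian law of $z$. For fixed $r>0$, let $\phi(y)=(y^2+r)^{-1}$, let $\varphi$ be the standard normal density, and set $h_r(\tau)=\mathbb{E}_z[\phi(\tau+z)]$. Then
\[
h_r'(\tau)=\int_0^\infty\phi'(y)\bigl[\varphi(y-\tau)-\varphi(y+\tau)\bigr]\,dy<0\qquad(\tau>0),
\]
since $\phi'<0$ on $(0,\infty)$ and $\varphi(y-\tau)>\varphi(y+\tau)$; this is the one-dimensional Anderson inequality. Condition on $\boldsymbol{\delta}$ and use $a\ge0$. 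Then $\mathbb{E}[RQ]=\mathbb{E}_{\boldsymbol{\delta}}[a\,h_r(\tau)]$ is decreasing in $\tau=\sqrt{n}|\mu|/\sigma$, i.e.\ increasing in the anomaly degree. This holds exactly for every $n$, and strictly once the graph has an edge. Spectrally, this is the fact that $\hat{x}_1\sim\mathcal{N}(\sqrt{n}\mu,\sigma^2)$ is independent of $\hat{x}_2,\dots,\hat{x}_n\sim\mathcal{N}(0,\sigma^2)$.
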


In other words, for a given node, the subgraph with the highest Rayleigh Quotient is likely to contain the most anomaly-relevant structural information. 
Guided by this insight, we utilize MRQSampler~\cite{UniGAD} to extract a $k$-hop subgraph $\mathcal{G}^{RQ}$ that maximizes the Rayleigh Quotient for each node, ensuring the resulting context is rich in anomaly cues. 
Finally, we generate the node-specific fusion coefficients $\boldsymbol{C}$ using an MLP conditioned on both the input features $\boldsymbol{X}$ and the anomaly context $\mathcal{G}^{RQ}$:
\begin{equation}\label{Eq: C}
    \boldsymbol{C} = \operatorname{MLP}(\operatorname{AGG}(\mathcal{G}^{RQ}, \boldsymbol{X}) \| \boldsymbol{X}),
\end{equation}
where $\|$ denotes concatenation and $\operatorname{AGG}(\cdot)$ is a parameter-free pooling function (e.g., mean pooling) that summarizes the anomaly-aware subgraph context for each node. 
This design of Anomaly Context-aware Adaptive Fusion enables the model to assign appropriate fusion weights to each node and feature dimension based on their specific structural and attribute properties, effectively mitigating node-level homophily disparity and improving detection performance.

\subsection{Frequency Preference Guidance Loss}
As revealed by the class-level local homophily disparity, abnormal nodes tend to camouflage themselves and exhibit higher heterophily. 
This suggests that different types of nodes prefer different frequency components: while normal nodes are typically associated with low-frequency signals, anomalies, which behave distinctly from their neighbors, tend to exhibit high-frequency characteristics. 
To model this class-specific frequency preference, we introduce a regularization term to guide the learning of the coefficient matrix $\boldsymbol{C}$. 
Let $c_i = \frac{1}{d} \sum_{j=1}^d \boldsymbol{C}_{ij}$ denote the average low-frequency preference of node $i$, computed by averaging over the embedding dimensions. 
We encourage $c_i$ to approach a predefined target $p^a \in [0,1]$ for abnormal nodes and $p^n \in [0,1]$ for normal nodes, with the constraint $p^a \le p^n$ to enforce a stronger high-frequency preference in anomalies. 
The frequency preference guidance loss is defined as a binary cross-entropy loss:
\begin{equation}\label{Eq: FPG}
\begin{aligned}
    \mathcal{L}_{fpg} = 
    &-\frac{\beta}{|\mathcal{V}^L|} \sum_{i \in \mathcal{V}^L, y_i = 1} \left( p^a \log c_i + (1 - p^a) \log (1 - c_i) \right) \\
    &-\frac{1}{|\mathcal{V}^L|} \sum_{i \in \mathcal{V}^L, y_i = 0} \left( p^n \log c_i + (1 - p^n) \log (1 - c_i) \right),
\end{aligned}
\end{equation}
where $\beta$ is the ratio of abnormal to normal nodes in the labeled set, used to counteract class imbalance.
This regularization enforces a frequency-aware separation between normal and abnormal nodes by aligning their fusion coefficients with distinct spectral preferences. 
Specifically, the constraint $p^a \le p^n$ biases the model to allocate more high-frequency information to anomalies, thereby enhancing the model's sensitivity to class-level homophily disparity and improving anomaly detection performance.

\vpara{Overall Objective.}
Using the fused embeddings $\boldsymbol{Z}$ obtained from Eq.~\eqref{Eq: Fusion}, we employ an MLP to predict the label $\hat{y}_i$ for each labeled node $i$, and compute the binary cross-entropy loss:
\begin{equation}\label{Eq: BCE}
    \mathcal{L}_{bce} = -\frac{1}{|\mathcal{V}^L|} \sum_{i \in \mathcal{V}^L} \left( \beta y_i \log \hat{y}_i + (1 - y_i) \log (1 - \hat{y}_i) \right).
\end{equation}
The final training objective combines the binary classification loss and the frequency preference guidance loss:
\begin{equation}\label{Eq: Overall}
    \mathcal{L} = \mathcal{L}_{bce} + \mathcal{L}_{fpg}.
\end{equation}

\subsection{Theoretical Analysis}
In this subsection, we provide a theoretical foundation to support our architectural design. Our analysis is grounded in the Contextual Stochastic Block Model (CSBM)~\cite{CSBM}, a widely adopted generative model for attributed graphs~\cite{GCTheory, HomoNece, Demy}. 
To properly reflect homophily disparity, degree heterogeneity, and class imbalance in GAD, we first introduce a variant:

\begin{definition}[$CSBM(n_a, n_n, \boldsymbol{\mu}, \boldsymbol{\nu}, (p_1,q_1),(p_2,q_2), \boldsymbol{\theta})$]\label{Def: ASBM}
Let $\mathcal{C}_a$ and $\mathcal{C}_n$ be the abnormal and normal node sets with sizes $n_a$ and $n_n$, respectively; $n=n_a+n_n$, class priors $\pi_a=n_a/n$, $\pi_n=1-\pi_a$ with $\pi_a\ll \pi_n$. 
Each row of the feature matrices $\boldsymbol{X}_a \in \mathbb{R}^{n_a \times d}$ and $\boldsymbol{X}_n \in \mathbb{R}^{n_n \times d}$ is sampled from $\mathcal{N}(\boldsymbol{\mu}, \frac{1}{d}\boldsymbol{I})$ and $\mathcal{N}(\boldsymbol{\nu}, \frac{1}{d}\boldsymbol{I})$, where $|\boldsymbol{\mu}|_2, |\boldsymbol{\nu}|_2 \leq 1$.
The full feature matrix $\boldsymbol{X}$ is formed by stacking $[\boldsymbol{X}_a; \boldsymbol{X}_n]$. 
Each node $v_{i}$ has a degree parameter $\theta_i>0$ (collected in $\boldsymbol{\theta}$), which controls how many edges it tends to form.  Each node adopts a connectivity regime: homophilic or heterophilic: a node in the homophilic regime prefers to connect to nodes of the same class, while a node in the heterophilic regime prefers the opposite. 
Accordingly, each potential edge $(i,j)$ is generated as
\[
\mathbb{P}(A_{ij}=1 \mid y_i, y_j, h_i)= \theta_i \theta_j \, \mathbf{B}^{(h_i)}_{y_i,y_j},
\]
where $y_i\in\{a,n\}$ is the class label and $h_i\in\{o,e\}$ is the regime of node $v_{i}$. 
The pattern-specific base matrices are
\[
\begin{aligned}
&\mathbf{B}^{(o)}=\begin{pmatrix} p_1 & q_1 \\ q_1 & p_1 \end{pmatrix},\quad p_1>q_1 \ \text{(homophily)};
\\
&\mathbf{B}^{(e)}=\begin{pmatrix} p_2 & q_2 \\ q_2 & p_2 \end{pmatrix},\quad p_2<q_2 \ \text{(heterophily)}.
\end{aligned}
\]
Here, $p_1>q_1$ enforces homophily (intra-class edges more likely), while $p_2<q_2$ enforces heterophily (inter-class edges more likely). 
We denote the corresponding homophilic and heterophilic node sets as $\mathcal{H}_o$ and $\mathcal{H}_e$. 
\end{definition}

In line with prior analyses~\cite{GCTheory}, we consider a linear classifier parameterized by $\boldsymbol{w} \in \mathbb{R}^d$ and $b \in \mathbb{R}$. 
For each node, the predicted label is computed as $\hat{\boldsymbol{y}} = \sigma(\hat{\boldsymbol{X}}\boldsymbol{w} + b\boldsymbol{1})$, where $\sigma(\cdot)$ is the sigmoid function, and $\hat{\boldsymbol{X}} = \boldsymbol{U}g(\boldsymbol{\Lambda})\boldsymbol{U}^\top \boldsymbol{X}$ denotes the spectrally filtered features. 
We adopt the binary cross-entropy loss defined in Eq.~\eqref{Eq: BCE}. 
The following theorem characterizes the model's separability when node-adaptive filters are applied:


\begin{theorem}\label{The: Node-wise Filtering}
For a graph $\mathcal{G}(\mathcal{V}, \mathcal{E}, \boldsymbol{X}) \sim CSBM(n_a, n_n, \boldsymbol{\mu}, \boldsymbol{\nu}, (p_1,\\ q_1), (p_2, q_2), \boldsymbol{\theta})$, when low- and high-pass filters are applied separately to the homophilic and heterophilic node sets $\mathcal{H}_o, \mathcal{H}_e$, there exist parameters $\boldsymbol{w}^*, b^*$ such that all nodes are linearly separable with probability $1-o_d(1)$.
\end{theorem}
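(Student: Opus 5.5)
We follow the standard CSBM separability argument of~\cite{GCTheory}. We take the simplest representatives of the two filter families. For nodes in $\mathcal{H}_o$ we use the low-pass filter given by (degree-normalized) mean aggregation, $\hat{\boldsymbol{x}}_i = \frac{1}{\deg_i}\sum_{j\in\mathcal{N}_i}\boldsymbol{x}_j$ (equivalently $\boldsymbol{I}-\tilde{\boldsymbol{L}}$ up to normalization). For nodes in $\mathcal{H}_e$ we use the high-pass filter $\hat{\boldsymbol{x}}_i = \boldsymbol{x}_i - \frac{1}{\deg_i}\sum_{j\in\mathcal{N}_i}\boldsymbol{x}_j$, or simply the negated aggregation $-\frac{1}{\deg_i}\sum_j \boldsymbol{x}_j$ coming from $\tilde{\boldsymbol{L}}-\boldsymbol{I}$. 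The plan has three stages:
\begin{enumerate}
\item compute the conditional expectation of $\hat{\boldsymbol{x}}_i$ for each (class, regime) pair;
\item show that, with a suitable sign choice, all four expectations fall on the correct side of a single hyperplane;
\item use Gaussian concentration to show every node stays within a small ball of its expectation.
\end{enumerate}

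\textbf{Stage 1: expectations.} Fix $i$. Because $\mathbb{P}(A_{ij}=1)=\theta_i\theta_j\mathbf{B}^{(h_i)}_{y_i,y_j}$, the factor $\theta_i$ cancels after normalization by $\deg_i$. The expected neighbor mean is therefore a convex combination $\alpha_i\boldsymbol{\mu}+(1-\alpha_i)\boldsymbol{\nu}$. For an anomaly in the homophilic regime the weight is
\[
\alpha_i=\frac{p_1 S_a}{p_1 S_a+q_1 S_n},\qquad S_c=\sum_{j\in\mathcal{C}_c}\theta_j ,
\]
and the other three cases are analogous. In the homophilic regime $p_1>q_1$, so same-class mass is up-weighted. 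In the heterophilic regime $p_2<q_2$, so the neighbor mean leans toward the \emph{opposite} class mean. The high-pass filter then flips the sign of this leaning. With the difference form $\boldsymbol{x}_i-\bar{\boldsymbol{x}}_{\mathcal{N}_i}$, the expectation becomes $(1-\alpha_i)(\boldsymbol{\mu}-\boldsymbol{\nu})$ for anomalies and $-(1-\alpha'_i)(\boldsymbol{\mu}-\boldsymbol{\nu})$ for normal nodes. Both point in the class-correct direction along $\boldsymbol{\mu}-\boldsymbol{\nu}$.

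\textbf{Stage 2: separating hyperplane.} We choose $\boldsymbol{w}^*=\boldsymbol{\mu}-\boldsymbol{\nu}$ (normalized), and pick $b^*$ so that the projections of all abnormal means exceed $-b^*$ and all normal means fall below it. The difficulty is that $\mathcal{H}_o$ and $\mathcal{H}_e$ produce different offsets: low-pass outputs are centered near the midpoint $(\boldsymbol{\mu}+\boldsymbol{\nu})/2$ plus a signed shift, while high-pass outputs are centered near $0$. We plan to remove this by centering, i.e. subtracting the global mean $\frac{\boldsymbol{\mu}+\boldsymbol{\nu}}{2}$ in the low-pass branch, which is a shift absorbed by the filter choice and the bias. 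Alternatively, we can add the ego feature in both branches so that both share a common affine structure. We then verify that the projected margin is at least a positive constant times $\|\boldsymbol{\mu}-\boldsymbol{\nu}\|^2$, with the constant depending on $p_1-q_1$ and $q_2-p_2$. Class imbalance $\pi_a\ll\pi_n$ enters only through $S_a,S_n$. I expect reconciling the two branches under one $(\boldsymbol{w}^*,b^*)$, with imbalance making the $\alpha$'s asymmetric, to be the main obstacle. If exact centering fails, I will state a mild condition on $\boldsymbol{\theta}$ and the base matrices, in the spirit of ``mild conditions'', ensuring the margin is positive.

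\textbf{Stage 3: concentration.} Conditional on the graph, $\hat{\boldsymbol{x}}_i-\mathbb{E}[\hat{\boldsymbol{x}}_i]$ has a Gaussian part with covariance $O(\frac{1}{d})\boldsymbol{I}$ scaled by the filter norm. Its projection onto the unit vector $\boldsymbol{w}^*$ is therefore $O(\sqrt{\log n/d})$ uniformly over all nodes by a union bound. This is $o_d(1)$ under the usual regime $\log n = o(d)$ combined with $\|\boldsymbol{\mu}-\boldsymbol{\nu}\|$ bounded below. The degree fluctuations, that is the deviation of the empirical neighbor proportions from $\alpha_i$, are controlled by Chernoff bounds, assuming $\theta_i\theta_j$ yields degrees growing as $\omega(\log n)$, exactly as in~\cite{GCTheory}. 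Combining the three stages, every node lies strictly on its correct side with probability $1-o_d(1)$.
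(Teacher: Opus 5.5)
\textbf{The gap is in Stage 2.} This is exactly where you expect trouble, and none of the remedies you list closes it.

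\emph{Why a shared bias cannot work.} Write $\mathbb{E}[\boldsymbol{X}]=\mathbf{1}\boldsymbol{\nu}^\top+\boldsymbol{y}_a(\boldsymbol{\mu}-\boldsymbol{\nu})^\top$, with $\boldsymbol{y}_a$ the anomaly indicator. Let $\boldsymbol{S}=\boldsymbol{D}^{-1}\boldsymbol{A}$, and let $\alpha_{a,o},\alpha_{n,o},\alpha_{a,e},\alpha_{n,e}$ be the limiting anomalous-neighbour fraction $(\boldsymbol{S}\boldsymbol{y}_a)_i$ of an anomalous/normal node in the homophilic/heterophilic regime. Under a filter $F$ with DC gain $\rho_F$ (i.e.\ $F\mathbf{1}=\rho_F\mathbf{1}$), node $i$'s centre projects to $\rho_F\langle\boldsymbol{\nu},\boldsymbol{w}\rangle+(F\boldsymbol{y}_a)_i\langle\boldsymbol{\mu}-\boldsymbol{\nu},\boldsymbol{w}\rangle$. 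Here $\rho_F=1,-1,0$ for $\boldsymbol{S},-\boldsymbol{S},\boldsymbol{I}-\boldsymbol{S}$. So the two regimes are shifted against each other by $(\rho_{F_o}-\rho_{F_e})\langle\boldsymbol{\nu},\boldsymbol{w}\rangle$, and a single $b^*$ cannot absorb that shift.

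\emph{What your pair would need.} Take $\boldsymbol{S}$ / $\boldsymbol{I}-\boldsymbol{S}$ with unit $\boldsymbol{w}^*\propto\boldsymbol{\mu}-\boldsymbol{\nu}$, and set $\eta=\langle\boldsymbol{\nu},\boldsymbol{\mu}-\boldsymbol{\nu}\rangle/\|\boldsymbol{\mu}-\boldsymbol{\nu}\|^2$. The anomalous centres project to $\|\boldsymbol{\mu}-\boldsymbol{\nu}\|$ times $\eta+\alpha_{a,o}$ and $1-\alpha_{a,e}$; the normal ones to $\eta+\alpha_{n,o}$ and $-\alpha_{n,e}$. A common threshold exists iff $-(\alpha_{a,o}+\alpha_{n,e})<\eta<1-\alpha_{a,e}-\alpha_{n,o}$. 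This is a condition on where the origin sits relative to $\boldsymbol{\mu},\boldsymbol{\nu}$, not a ``mild condition'' on $\boldsymbol{\theta}$ or $\mathbf{B}$.

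\emph{A concrete failure.} Take $\boldsymbol{\mu}=\boldsymbol{0}$, so $\eta=-1$, with $\pi_a\ll\pi_n$ and $p_1/q_1$, $q_2/p_2$ bounded; then $\alpha_{a,o},\alpha_{n,e}=O(\pi_a/\pi_n)$ and the condition fails. Worse, the four centres lie on $\mathbb{R}\boldsymbol{\nu}$ in the order anomaly, normal, anomaly, normal, and the same happens for $\pm\boldsymbol{S}$. No hyperplane separates them, so no choice of $(\boldsymbol{w},b)$ rescues a means-plus-concentration argument.

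\emph{Your remedies do not help.} Adding an ego term to both branches raises both DC gains by the same amount, so the offset is unchanged. Subtracting $(\boldsymbol{\mu}+\boldsymbol{\nu})/2$ in one branch only cannot be absorbed by the shared bias, and it is not a graph filter of $\boldsymbol{X}$. It is also the wrong centre under imbalance: with your high-pass it still requires $\alpha_{a,o}+\alpha_{n,e}>1/2$.

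\textbf{How to repair it.} Your centring instinct is right, but the centring must sit inside the filter and use the actual global mean $\approx\pi_a\boldsymbol{\mu}+\pi_n\boldsymbol{\nu}$. Take, e.g., the pair $\pm(\boldsymbol{S}-\tfrac{1}{n}\mathbf{1}\mathbf{1}^\top)$, so both branches have DC gain $0$. The centres become $(\alpha-\pi_a)(\boldsymbol{\mu}-\boldsymbol{\nu})$ on $\mathcal{H}_o$ and $(\pi_a-\alpha)(\boldsymbol{\mu}-\boldsymbol{\nu})$ on $\mathcal{H}_e$. Then $\boldsymbol{w}^*\propto\boldsymbol{\mu}-\boldsymbol{\nu}$ and $b^*=0$ work. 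The reason is that $\alpha_{a,o}>\pi_a>\alpha_{n,o}$ and $\alpha_{a,e}<\pi_a<\alpha_{n,e}$ are equivalent to $p_1>q_1$ and $p_2<q_2$, up to the $\boldsymbol{\theta}$-weighting of the class shares.

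This changes Stage 3 in two ways.
\begin{enumerate}
\item The margin is of order $\pi_a\|\boldsymbol{\mu}-\boldsymbol{\nu}\|$, not a constant. Both the Gaussian term and the neighbour-fraction fluctuation must be beaten at that scale; the latter needs $\deg_i\gg\log n/\pi_a$.
\item The ego term in your difference form $\boldsymbol{x}_i-\bar{\boldsymbol{x}}_{\mathcal{N}_i}$ contributes $\Theta(\sqrt{\log n/d})$ noise. That confines you to $\|\boldsymbol{\mu}-\boldsymbol{\nu}\|\gg\sqrt{\log n/d}$, where raw $\boldsymbol{X}$ is already separable and filtering adds nothing. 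The pure-aggregation pair keeps the noise at $O(\sqrt{\log n/(d\deg_i)})$.
\end{enumerate}

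\textbf{On the paper's proof.} You cannot borrow the missing step from it. It uses $\pm\boldsymbol{S}$, $\boldsymbol{w}^*\propto\boldsymbol{\nu}-\boldsymbol{\mu}$ and the single bias $-\langle\boldsymbol{\mu}+\boldsymbol{\nu},\boldsymbol{w}^*\rangle/2+\tau_\pi$, and simply asserts one ordering across both regimes. For $\boldsymbol{\mu}=\boldsymbol{0}$, that separator puts heterophilic normal nodes on the anomaly side. So the offset you identified is real. It has to be handled either by matching DC gains as above, or by an explicit hypothesis such as non-collinear $\boldsymbol{\mu},\boldsymbol{\nu}$ with $\boldsymbol{w}$ chosen in their span.
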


\input{tables/auprc}

The proof is provided in Appendix~\ref{App: Proof}. 
Theorem~\ref{The: Node-wise Filtering} theoretically guarantees that, under mild assumptions, applying node-wise low- and high-pass filters according to node homophily enables linear separability of the filtered features.
Our architectural design is built upon this principle. 
We first extract candidate embeddings for each node using both low- and high-pass filters. 
Then, our Anomaly Context-aware Adaptive Fusion module adaptively combines the filtered features conditioned on node-specific attributes and structural contexts. 
Guided by both classification and frequency preference loss, the model dynamically adjusts fusion coefficients to reflect each node's homophily. 
This learned fusion strategy approximates the node-wise filter assignment postulated in the theorem, thereby approaching the theoretical bound of linear separability and enhancing performance.

\subsection{Complexity Analysis}
The overall framework consists of two stages: a preprocessing phase and a mini-batch training phase.
In the preprocessing stage, we compute the $K$-hop Chebyshev basis embeddings $T_k(\hat{\boldsymbol{L}}) \boldsymbol{X}$ via iterative sparse matrix multiplications, which requires $\mathcal{O}(K(m + n))$ time. 
Additionally, we extract the Rayleigh Quotient-guided subgraph $\mathcal{G}^{RQ}$ using the MRQSampler, with a total complexity of $\mathcal{O}(n \log n)$~\cite{UniGAD}.
Since the sampling procedure for each node is independent, this step can be parallelized to further reduce runtime. 
Notably, all preprocessing operations are performed only once and are reused during both training and inference, thus incurring minimal overhead.
In the mini-batch training phase, both the Chebyshev embeddings $T_k(\hat{\boldsymbol{L}})\boldsymbol{X}$ and the parameter-free  $\operatorname{AGG}(\mathcal{G}^{RQ}, \boldsymbol{X})$ are precomputed and cached. 
As a result, each node can be treated independently during training, and the encoder, which consists of simple MLPs, exhibits time complexity that scales linearly with the batch size.
In summary, with the decoupled architecture, SAGAD demonstrates strong scalability and is well-suited for large-scale graphs like T-Social, which contains over 5.78M nodes and 73.1M edges (please see "Scalability Comparison" in Section~\ref{Sec: Model Comparison}).

\section{Experiments}

In this section, we conduct comprehensive experiments and answer the following research questions: 
\textbf{RQ1:} How does SAGAD compare to state-of-the-art GAD baselines? 
\textbf{RQ2:} To what extent can SAGAD mitigate homophily disparity?  
\textbf{RQ3:} How well does SAGAD scale on large-scale graphs?  
\textbf{RQ4} Do our key components contribute positively to GAD? 
\textbf{RQ2:} How to visually understand the learned fusion coefficients? 
\textbf{RQ6:} How do hyperparameters influence the performance of SAGAD?

\subsection{Experimental Setup}
\vpara{Datasets.}
Following GADBench~\cite{GADBench}, we conduct experiments on 10 real-world datasets spanning various scales and domains: Reddit~\cite{RedWei}, Weibo~\cite{RedWei}, Amazon~\cite{Amazon}, YelpChi~\cite{YelpChi}, T-Finance~\cite{BWGNN}, Elliptic~\cite{Elliptic}, Tolokers~\cite{TolQue}, Questions~\cite{TolQue},  DGraph-Fin~\cite{DGraph}, and T-Social~\cite{BWGNN}. 
Dataset statistics are presented in Table \ref{Tab: Dataset statistics}.

\vpara{Baselines.}
We compare our model with a series of baseline methods, which can be categorized into the following groups: 
standard GNNs, including GCN~\cite{GCN}, SAGE~\cite{SAGE}, GIN~\cite{GIN}, GAT~\cite{GAT}; 
heterophilic GNNs, including H2GCN~\cite{H2GCN}, FAGCN~\cite{FAGCN}, AdaGNN~\cite{AdaGNN}, BernNet~\cite{BernNet}, and ACM~\cite{ACM}; GAD-specific GNNs, including GAS~\cite{GAS}, DCI~\cite{DCI}, PCGNN~\cite{PCGNN}, AMNet~\cite{AMNet}, BWGNN~\cite{BWGNN}, GHRN~\cite{GHRN}, XGBGraph~\cite{GADBench}, ConsisGAD~\cite{ConsisGAD}, and SpaceGNN~\cite{SpaceGNN}. 

\vpara{Metrics and Implementation Details.}
Following GADBench~\cite{GADBench}, we use the Area Under the Receiver Operating Characteristic Curve (AUROC), the Area Under the Precision-Recall Curve (AUPRC) calculated via average precision, and the Recall score within the top-$K$ predictions (Rec@K) as metrics, with 100 labeled nodes (20 anomalies) per training set. 
All experiments are averaged over 10 random splits provided by GADBench for robustness. 
Due to space constraints, more details about evaluation protocols and hyperparameter settings are provided in Appendix~\ref{App: Evaluation Protocols} and Appendix~\ref{App: Implementation Details}, respectively.
Our code is publicly available at \url{https://github.com/Cloudy1225/SAGAD}.

\begin{table*}[h]
\centering
\caption{Performance gaps (AUPRC difference) between Q1 and other quartiles across datasets. Lower absolute values indicate smaller homophily disparity.}
\label{Tab: Homo Variance}
\begin{tabular}{l|ccc|ccc|ccc|ccc}
\toprule
\multirow{2}{*}{Dataset} & \multicolumn{3}{c|}{Weibo} & \multicolumn{3}{c|}{T-Finance} & \multicolumn{3}{c|}{Amazon} & \multicolumn{3}{c}{YelpChi} \\
\cmidrule(lr){2-4} \cmidrule(lr){5-7} \cmidrule(lr){8-10} \cmidrule(lr){11-13}
 & Q1-Q2 & Q1-Q3 & Q1-Q4 & Q1-Q2 & Q1-Q3 & Q1-Q4 & Q1-Q2 & Q1-Q3 & Q1-Q4 & Q1-Q2 & Q1-Q3 & Q1-Q4 \\
\midrule
AMNet     & \emph{9.40} & \emph{32.88} & \textbf{27.32} & \emph{3.44} & 30.27 & \textbf{58.66} & 5.43 & 20.65 & 47.10 & \textbf{-0.71} & \textbf{-0.19} & 0.54 \\
BWGNN     & 10.73 & 39.18 & 44.89 & 9.47 & 56.56 & 86.60 & 8.89 & 26.14 & 56.74 & \emph{-1.09} & \emph{-0.60} & \emph{0.43} \\
ConsisGAD & 13.43 & 43.95 & 54.96 & 3.75 & \textbf{25.04} & 89.25 & \textbf{-3.24} & \textbf{14.63} & \textbf{39.47} & -1.75 & \textbf{-0.19} & \textbf{-0.06} \\
SAGAD     & \textbf{1.95} & \textbf{7.37} & \emph{35.26} & \textbf{1.46} & \emph{26.26} & \emph{70.22} & \emph{5.42} & \emph{14.95} & \emph{46.93} & 4.30 & 6.25 & 7.07 \\
\bottomrule
\end{tabular}
\end{table*}

\subsection{Model Comparison}\label{Sec: Model Comparison}

\vpara{Performance Comparison (RQ1).}
Table~\ref{Tab: AUPRC} presents the AUPRC performance of all compared methods. 
Additional results for AUROC and Rec@K are provided in Appendix Table~\ref{Tab: AUROC} and Table~\ref{Tab: Rec@K}, respectively. 
Overall, our method consistently outperforms all standard GNNs, heterophilic GNNs, and GAD-specific GNNs, achieving state-of-the-art results.
Specifically, SAGAD yields an average improvement of 5.0\% in AUPRC, 3.3\% in AUROC, and 5.7\% in Rec@K over the strongest baselines, ConsisGAD, XGBGraph, and SpaceGNN. 
On Weibo and T-Social, SAGAD achieves substantial AUPRC gains of 8.8\% and 12.8\%, respectively. 
These results strongly support that our model better captures subtle anomaly cues in GAD.
However, we also observe a performance gap between SAGAD and XGBGraph on the Elliptic dataset. 
This may be attributed to the highly heterogeneous tabular node features in Elliptic, which are better handled by tree-based models such as XGBoost compared to deep learning methods~\cite{Tree}. 
Nevertheless, SAGAD still outperforms all other deep learning baselines by a considerable margin.

\begin{figure}[t]
    \centering
    \subfigure[Amazon]{\includegraphics[width=0.495\linewidth]{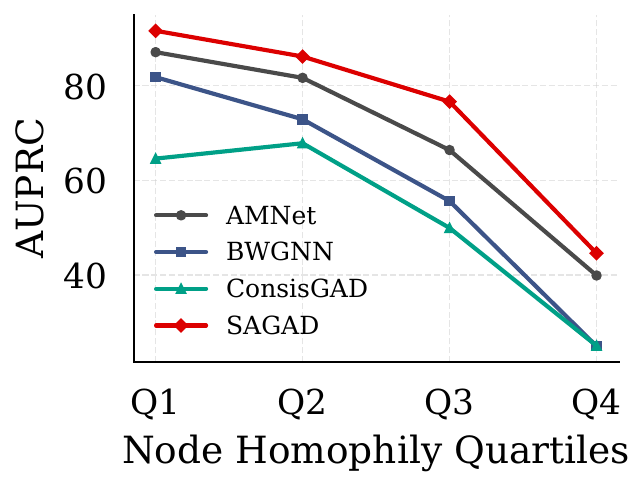}\label{Fig: Perf_Disparity_Amazon}}
    \subfigure[YelpChi]{\includegraphics[width=0.495\linewidth]{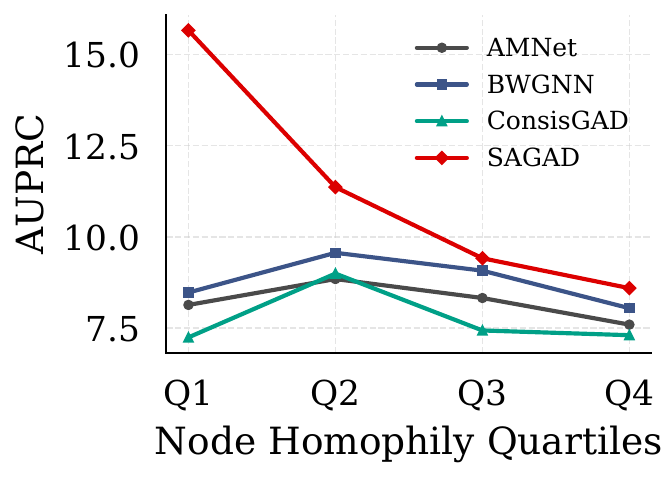}\label{Fig: Perf_Disparity_Yelp}}
    \caption{Performance disparity across node homophily quartiles (Q1 = top 25\% homophily, Q4 = bottom 25\%) on Amazon and YelpChi.}
    \label{Fig: Perf_Disparity}
\end{figure}

\vpara{Homophily Disparity Mitigation (RQ2).}
Local homophily has a profound impact on GNN-based methods due to their intrinsic reliance on neighborhood message passing, which inevitably propagates noisy signals in heterophilic regions. 
As a result, the performance of GNNs tends to decrease when node homophily becomes lower, making homophily disparity a fundamental challenge rather than one that can be completely eliminated. 
To examine how different approaches cope with this issue, we divide the abnormal nodes in the test set into four groups based on their node homophily and assess performance within each group against the remaining normal nodes. 
As shown in Figures~\ref{Fig: Homo_Perf_Disparity} and~\ref{Fig: Perf_Disparity}, most existing methods tend to perform better on nodes with high homophily compared to those with lower homophily. 
In contrast, SAGAD consistently improves performance across all homophily quartiles, demonstrating enhanced robustness to homophily disparity. 
We further report the AUPRC gaps between Q1 and the other quartiles (Q2--Q4) in Table~\ref{Tab: Homo Variance}, where SAGAD attains the lowest or second-lowest variance on the majority of datasets. 
Even on YelpChi, where the variance is larger than baselines (due to their weak performance in Q1), SAGAD delivers a substantial AUPRC improvement in the high-homophily group Q1 (+7.19\%). 
These results indicate that our anomaly context-aware adaptive fusion, combining Rayleigh Quotient-guided subgraphs with node-specific fusion, effectively mitigates, though does not fully eliminate, the negative effects of homophily disparity.

\vpara{Scalability Comparison (RQ3).}
We evaluate both time and memory efficiency during training on YelpChi and T-Social. 
Table~\ref{Tab: Scalability Comparison} shows that our method significantly reduces both training time and GPU memory usage. 
On the large-scale T-Social dataset (5.78M nodes and 73.1M edges), SAGAD requires only 1455.50 MB of GPU memory, approximately 10× less than competitive baselines, demonstrating excellent scalability. 
This efficiency stems from our decoupled architecture, which enables mini-batch training without processing the entire graph, thereby ensuring linear scalability with respect to batch size.

\input{tables/scalability}

\subsection{Model Analyses}

\input{tables/ablation_auprc}

\vpara{Ablation Study (RQ4).}
We conduct ablation experiments to investigate the contributions of different components in our framework. 
Specifically, we consider the following variants: (i) using only the low-pass filter or only the high-pass filter to derive node embeddings; (ii) replacing our adaptive fusion with alternatives, including mean, concatenation, or attention-based fusion~\cite{AMNet}, to integrate low- and high-pass embeddings; (iii) learning the coefficient matrix $\boldsymbol{C}$ in Eq.~\eqref{Eq: C} with only raw node features $\boldsymbol{X}$ or with the full $k$-hop subgraph \ding{66}, instead of our Rayleigh Quotient-guided subgraph; and (iv) removing the frequency preference guidance loss $\mathcal{L}_{fpg}$. 
The results are summarized in Table~\ref{Tab: Ablation Study AUPRC AUROC}.

The results reveal several important findings.
(1) Variants that exploit both low- and high-frequency information consistently outperform those relying on only one component. This confirms that both homophilic and heterophilic cues are indispensable for GAD tasks. 
(2) Our anomaly context-aware adaptive fusion achieves the best performance compared to mean, concatenation, and attention-based fusion strategies, underscoring the benefits of jointly considering frequency-selective embeddings and node-specific adaptivity. 
(3) The Rayleigh Quotient-guided subgraph proves more effective than using either the full $k$-hop subgraph or only the raw node features. This highlights its ability to capture localized structural anomalies while filtering out irrelevant neighborhood noise. 
(4) The regularization term $\mathcal{L}_{fpg}$ provides additional improvements by explicitly aligning the spectral preferences of anomalies and normal nodes, thereby enhancing their discriminability. 
Overall, these results validate that each proposed design, dual-pass filtering, adaptive fusion, Rayleigh Quotient-guided context, and frequency preference regularization, contributes synergistically to the superior performance of our framework.

\vpara{Coefficients Visualization (RQ5).}
We further provide heatmaps in Figure~\ref{Fig: Coef_Heatmap} to illustrate the learned fusion coefficients.
For clarity, we visualize the top 8 dimensions of $\boldsymbol{C}$ for randomly selected abnormal and normal nodes.
As shown, the fusion coefficients are personalized on a per-node basis, confirming that the model learns node-specific weighting schemes. 
Moreover, abnormal nodes generally exhibit lower coefficient values than normal nodes, which aligns with the intuition that anomalies should rely more on high-frequency information to enhance distinguishability.
Overall, our model enables fine-grained fusion, allowing each node to emphasize relevant frequency components in its embedding based on local anomaly structures. 

\begin{figure}[t]
    \centering
    \subfigure[YelpChi]{\includegraphics[width=0.495\linewidth]{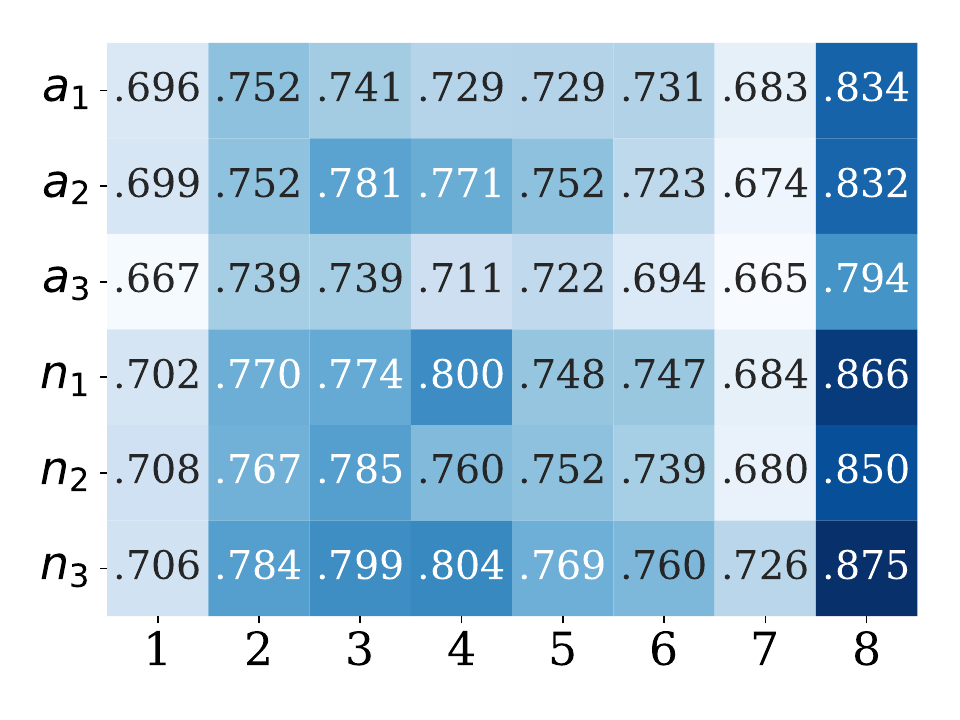}}
    \subfigure[T-Finance]{\includegraphics[width=0.495\linewidth]{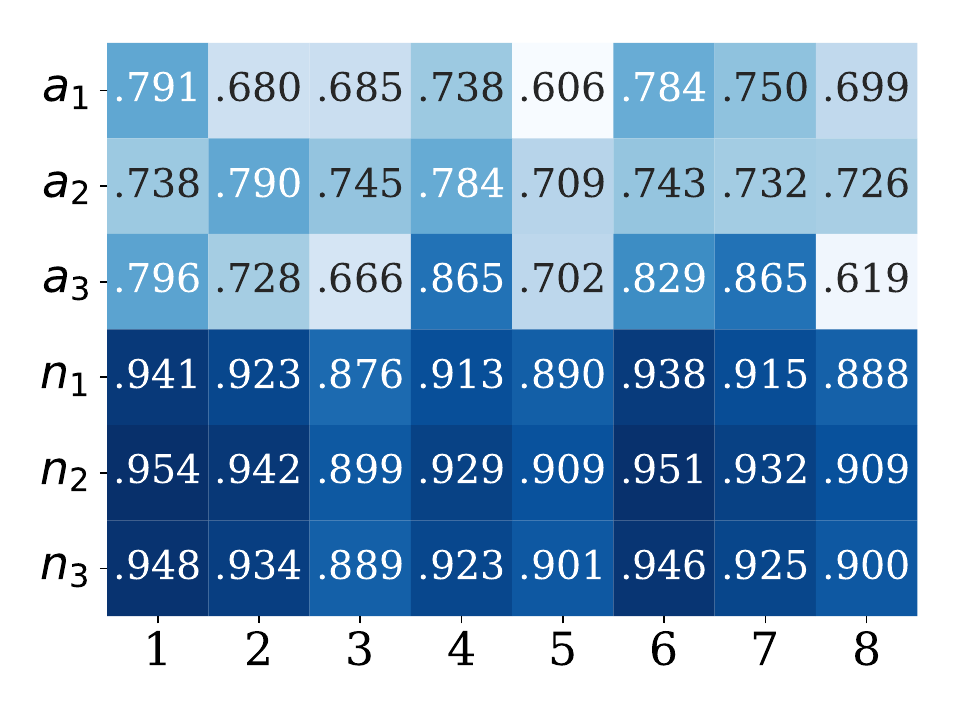}}
    \caption{Visualization of the learned coefficients for the top 8 dimensions. Nodes ($a_1, a_2, a_3$) and ($n_1, n_2, n_3$) are randomly selected abnormal and normal nodes, respectively.}
    \label{Fig: Coef_Heatmap}
\end{figure}

\vpara{Hyperparameter Analysis (RQ6).}
We analyze the sensitivity of our model to two key hyperparameters, $p_a$ and $p_n$, which guide the expected preference for low-frequency information in abnormal and normal nodes.
We vary both values from 0.0 to 1.0 and report the AUPRC in Figure~\ref{Fig: Varying_p_AUPRC}. 
Results show that higher $p_n$ (right side of the heatmap) generally yields better performance, as normal nodes benefit more from low-frequency signals that reflect global smoothness. 
The best results appear when $p_a \leq p_n$, indicating that abnormal nodes favor more high-frequency information, consistent with their irregular behavior.
Overall, SAGAD shows robustness to $p_a$ and $p_n$: by maintaining $p_a$ and $p_n$ within the appropriate range, it achieves competitive performance.




\section{Conclusion}
This paper tackles homophily disparity and scalability issues in GAD by introducing SAGAD. 
It integrates three key designs: a dual-pass Chebyshev polynomial filter to capture both low- and high-frequency signals, an anomaly context-aware adaptive fusion to selectively combine them based on Rayleigh Quotient-guided subgraphs, and a frequency preference guidance loss to encode class-specific spectral biases. 
Our theoretical analysis established that node-adaptive filtering ensures asymptotic linear separability. 
Additionally, SAGAD consistently outperformed state-of-the-art baselines, demonstrated stronger robustness under homophily disparity, and achieved substantial reductions in memory and time consumption on large-scale graphs.

\begin{acks}
This work is supported by the National Natural Science Foundation of China (62306137).
\end{acks}

\bibliographystyle{ACM-Reference-Format}
\bibliography{main}

\appendix

\input{tables/dataset}

\section{Proof}\label{App: Proof}
\begin{proof}
We extend the argument of~\cite{GCTheory} from a single-pattern $CSBM$ to the mixed-pattern, degree-corrected, and class-imbalanced $ASBM$ in Definition~\ref{Def: ASBM}.
Throughout the proof, we make the following standard assumptions:
\begin{enumerate}
    \item the graph size is relatively large with $\omega(d\log d)\le n\le \mathcal{O}(poly(d))$;
    \item sparsity level obeys $p_1,q_1,p_2,q_2=\omega(\log^2 n/n)$;
    \item degree parameters are bounded and centered as $\theta_{\min}\le \theta_i\le \theta_{\max}$ with $\chi:=\theta_{\max}/\theta_{\min}=\mathcal{O}(1)$ and $\mathbb{E}[\theta_i\mid z_i]=1$ for $z_i\in\{a,n\}$;
    \item class prior is imbalanced with $\pi_a=n_a/n\ll \pi_n=1-\pi_a$.
\end{enumerate}

Following~\cite{GCTheory, HomoNece, Demy, Node-MoE}, we adopt the random-walk operator $\boldsymbol{S}=\boldsymbol{D}^{-1}\boldsymbol{A}$ as a low-pass filter and its negative $-\boldsymbol{S}$ as a high-pass filter.
Concretely, each node $v_{i}$ is assigned a spectral filter function $g(\cdot)$ depending on its pattern: 
$g(\lambda)=\lambda$ if $v_{i} \in \mathcal{H}_o$ (homophilic), and $g(\lambda)=-\lambda$ if $v_{i} \in \mathcal{H}_e$ (heterophilic).
This yields the node-adaptive filtered features
$\hat{\boldsymbol{X}}=\boldsymbol{U}g(\boldsymbol{\Lambda})\boldsymbol{U}^\top\boldsymbol{X}$.

\paragraph{Concentration under degree correction.}
Let degree $d_i=\sum_j A_{ij}$ and recall that, conditional on $(z_i,h_i)$, we have
$\mathbb{E}[A_{ij}\mid z_i,z_j,h_i]=\theta_i\theta_j \mathbf{B}^{(h_i)}_{z_i,z_j}$.
Standard Bernstein~\cite{Bernstein1, Bernstein2} bounds with bounded $\theta$ and $\omega(\log^2 n / n)$ sparsity yield
\[
d_i = \theta_i\, n \big(\pi_a \beta_{z_i,a}^{(h_i)} + \pi_n \beta_{z_i,n}^{(h_i)}\big)\,(1\pm o(1)),
\]
where $\beta_{u,v}^{(h)}:=\mathbf{B}^{(h)}_{u,v}$.
Hence for any node $v_{i}$ and any unit vector $\boldsymbol{w}$,
\[
\begin{aligned}
&\hat{\boldsymbol{X}}_i
= \pm\,\frac{1}{d_i}\sum_{j} A_{ij}\, \boldsymbol{X}_j
= \pm\Big(\mathbb{E}[\hat{\boldsymbol{X}}_i \mid z_i,h_i] + \boldsymbol{\xi}_i\Big), 
\\
&\big|\langle \boldsymbol{\xi}_i,\boldsymbol{w}\rangle\big| 
= \mathcal{O}\!\left(\sqrt{\frac{\log n}{d\, d_i}}\right)
= \mathcal{O}\!\left(\sqrt{\frac{\log n}{d\, n\, \theta_i\, \kappa_i}}\right),
\end{aligned}
\]
with $\kappa_i:=\pi_a \beta_{z_i,a}^{(h_i)} + \pi_n \beta_{z_i,n}^{(h_i)}$. 
The sign ± corresponds to low- vs. high-pass filtering. 
Let the effective average connectivity under
imbalance be
\[
\kappa_{\mathrm{eff}}
:= \min\big\{\pi_a p_1+\pi_n q_1,\ \pi_a q_1+\pi_n p_1,\ \pi_a p_2+\pi_n q_2,\ \pi_a q_2+\pi_n p_2\big\}.
\]
Using $\theta_i\ge \theta_{\min}$ and $\kappa_i\ge \kappa_{\mathrm{eff}}$, we obtain the uniform deviation bound~\cite{GCTheory}
\begin{equation}
\label{eq:dev-ASBM}
\big|\langle \hat{\boldsymbol{X}}_i - \mathbb{E}[\hat{\boldsymbol{X}}_i\mid z_i,h_i], \boldsymbol{w}\rangle\big|
= \mathcal{O}\!\left(\sqrt{\frac{\log n}{d\, n\, \kappa_{\mathrm{eff}}}}\right),
\end{equation}
matching the CSBM rate up to the effective factor $\kappa_{\mathrm{eff}}$.

\paragraph{Pattern-dependent means after node-adaptive filtering.}
Condition on $z_i$ and $h_i$.
A neighboring node $v_{j}$ belongs to class $a$ with probability proportional to $\theta_j \beta^{(h_i)}_{z_i,a}$ and to class $n$ with probability proportional to $\theta_j \beta^{(h_i)}_{z_i,n}$.
By (iii) and bounded heterogeneity $\chi=\mathcal{O}(1)$, the neighbor-class proportions concentrate at
\[
\omega_{i,a}^{(h_i)}=\frac{\pi_a \beta^{(h_i)}_{z_i,a}}{\pi_a \beta^{(h_i)}_{z_i,a}+\pi_n \beta^{(h_i)}_{z_i,n}},
\quad
\omega_{i,n}^{(h_i)}=1-\omega_{i,a}^{(h_i)}.
\]
Thus the low-pass mean is a degree-normalized mixture
\[
\mathbb{E}[\boldsymbol{S}\boldsymbol{X}]_i
= \omega_{i,a}^{(h_i)} \boldsymbol{\mu} + \omega_{i,n}^{(h_i)} \boldsymbol{\nu}\ (1\pm o(1)),
\]
while the high-pass mean flips the sign:
$\mathbb{E}[-\boldsymbol{S}\boldsymbol{X}]_i = -\mathbb{E}[\boldsymbol{S}\boldsymbol{X}]_i$.
Enumerating cases gives, for $h_i=o$ (homophily):
\[
\mathbb{E}[\hat{\boldsymbol{X}}_i]
=
\begin{cases}
\frac{\pi_a p_1 \boldsymbol{\mu} + \pi_n q_1 \boldsymbol{\nu}}{\pi_a p_1+\pi_n q_1}\ (1\pm o(1)), & z_i=a,\\[4pt]
\frac{\pi_a q_1 \boldsymbol{\mu} + \pi_n p_1 \boldsymbol{\nu}}{\pi_a q_1+\pi_n p_1}\ (1\pm o(1)), & z_i=n,
\end{cases}
\]
and for $h_i=e$ (heterophily) the same expressions with $(p_1,q_1)$ replaced by $(p_2,q_2)$, followed by an overall sign flip due to the high-pass ($-\boldsymbol{S}$).
Consequently, under the node-adaptive choice (low-pass on $\mathcal{H}_o$ and high-pass on $\mathcal{H}_e$), all class-wise means align in the same ordering along direction $\boldsymbol{\nu}-\boldsymbol{\mu}$:
\[
\langle \mathbb{E}[\hat{\boldsymbol{X}}_i\mid z_i=a],\boldsymbol{\nu}-\boldsymbol{\mu}\rangle
<
\langle \mathbb{E}[\hat{\boldsymbol{X}}_i\mid z_i=n],\boldsymbol{\nu}-\boldsymbol{\mu}\rangle,
\]
with a gap proportional to $\Delta_{h}:=\frac{|p_h-q_h|}{\pi_a p_h+\pi_n q_h + \pi_a q_h+\pi_n p_h}$ (here $h\in\{1,2\}$ indexes $(p_1,q_1)$ or $(p_2,q_2)$), hence lower bounded by a constant multiple of $\frac{|p_h-q_h|}{p_h+q_h}$ and independent of $\theta$ due to the random-walk normalization.

\paragraph{Prior-aware linear separator and margin.}
Consider the linear classifier with direction
$\boldsymbol{w}^* = R\frac{\boldsymbol{\nu}-\boldsymbol{\mu}}{\|\boldsymbol{\mu}-\boldsymbol{\nu}\|}$ and a bias that accounts for the class prior shift
\[
b^* = -\frac{\langle \boldsymbol{\mu}+\boldsymbol{\nu}, \boldsymbol{w}^*\rangle}{2}
\ +\ \tau_\pi,
\qquad
\tau_\pi = R\,\frac{\log(\pi_a/\pi_n)}{\|\boldsymbol{\mu}-\boldsymbol{\nu}\|}.
\]
The additive term $\tau_\pi$ is the standard LDA correction under unequal priors with spherical covariances~\cite{HastieTF}.
For any $v_{i} \in \mathcal{C}_a$, combining Step~2 and Eq~\ref{eq:dev-ASBM} gives
\begin{align*}
&\langle \hat{\boldsymbol{X}}_i,\boldsymbol{w}^* \rangle + b^*\\
&= \langle \mathbb{E}[\hat{\boldsymbol{X}}_i\mid z_i=a],\boldsymbol{w}^* \rangle 
 - \frac{\langle \boldsymbol{\mu}+\boldsymbol{\nu},\boldsymbol{w}^*\rangle}{2} + \tau_\pi
 + \underbrace{\langle \hat{\boldsymbol{X}}_i-\mathbb{E}[\hat{\boldsymbol{X}}_i],\boldsymbol{w}^*\rangle}_{\text{fluctuation}} \\
&= -\frac{R}{2}\,\Gamma_{h_i}\,\|\boldsymbol{\mu}-\boldsymbol{\nu}\| \ (1\pm o(1))\ +\ \tau_\pi\ +\ \mathcal{O}\!\Big(R\sqrt{\tfrac{\log n}{d\, n\, \kappa_{\mathrm{eff}}}}\Big),
\end{align*}
where $\Gamma_{h_i}>0$ depends on the pattern (homophily vs.\ heterophily) via the mixture coefficients in Step~2 (and is uniformly bounded away from $0$ as $|p_h-q_h|>0$).
For $i\in\mathcal{C}_n$ the sign of the leading term is positive.
Hence, if the feature center distance satisfies
\[
\|\boldsymbol{\mu}-\boldsymbol{\nu}\|
\ \ge\ C\,\frac{\log n}{\sqrt{d\, n\, \kappa_{\mathrm{eff}}}}
\]
for a sufficiently large constant $C>0$, the margin term dominates both the stochastic fluctuation
$\mathcal{O}\!\big(R\sqrt{\tfrac{\log n}{d\, n\, \kappa_{\mathrm{eff}}}}\big)$
and the prior correction $\tau_\pi= \mathcal{O}\!\big(R\,|\log(\pi_a/\pi_n)|/\|\boldsymbol{\mu}-\boldsymbol{\nu}\|\big)$ even when $\pi_a\ll\pi_n$.
Therefore, according to part 2 of Theorem 1 in \cite{GCTheory}, $\operatorname{sign}(\langle \hat{\boldsymbol{X}}_i,\boldsymbol{w}^*\rangle + b^*)$ equals the true label for all nodes with probability $1-o_d(1)$, by a union bound over $v_{i} \in \mathcal{V}$.
This establishes linear separability of node-adaptively filtered features under degree correction and class imbalance, proving the theorem.
\end{proof}

\begin{figure}[t]
    \centering
    \subfigure[Reddit]{\includegraphics[width=0.495\linewidth]{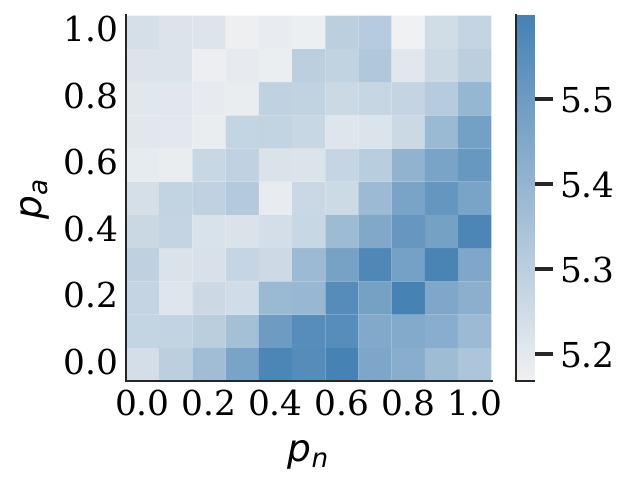}}
    \subfigure[YelpChi]{\includegraphics[width=0.495\linewidth]{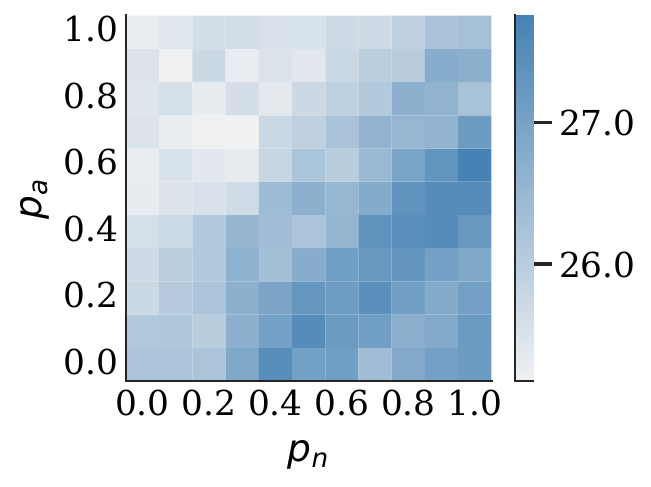}}
    \subfigure[Tolokers]{\includegraphics[width=0.495\linewidth]{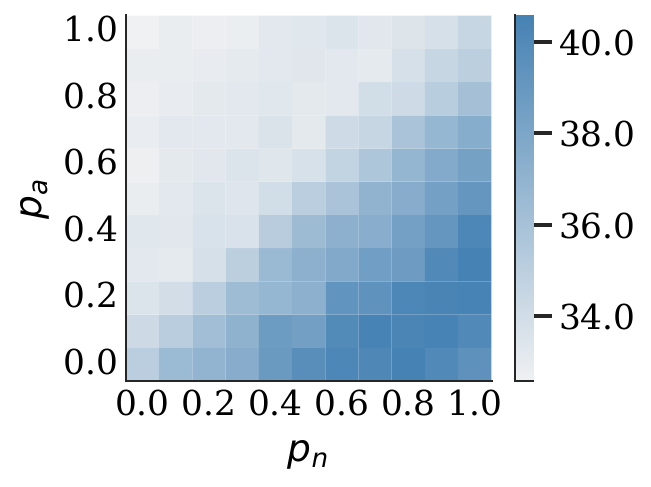}}
    \subfigure[T-Social]{\includegraphics[width=0.495\linewidth]{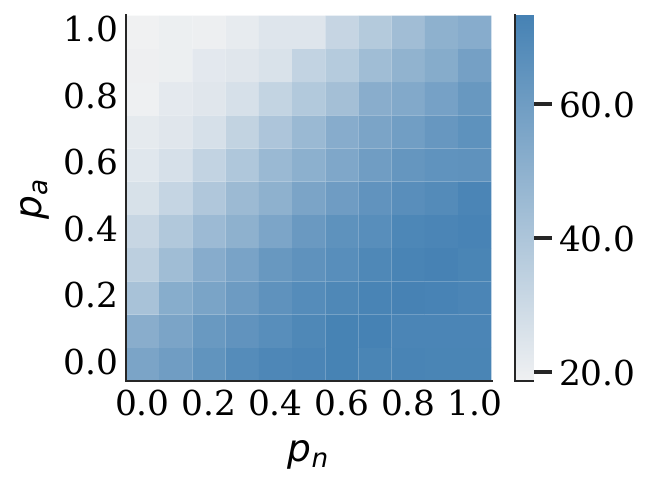}}
    \caption{How the AUPRC score varies with different values of $p_a$ and $p_n$.}
    \label{Fig: Varying_p_AUPRC}
\end{figure}

\section{Additional Experimental Details}


\input{tables/auroc}

\input{tables/reck}

\subsection{Evaluation Protocols}\label{App: Evaluation Protocols}
Following GADBench~\cite{GADBench}, we evaluate performance using three popular metrics: the Area Under the Receiver Operating Characteristic Curve (AUROC), the Area Under the Precision-Recall Curve (AUPRC) calculated via average precision, and the Recall score within the top-$K$ predictions (Rec@K). 
Here, $K$ corresponds to the number of anomalies in the test set. 
For all metrics, anomalies are treated as the positive class, with higher scores indicating better model performance. 
To simulate real-world scenarios with limited supervision~\cite{GADBench}, we standardize the training/validation set across all datasets to include 100 labels—20 positive (abnormal) and 80 negative (normal) labels. 
To ensure the robustness of our findings, we perform 10 random splits, as provided by~\cite{GADBench}, on each dataset and report the average performance.

\subsection{Implementation Details}\label{App: Implementation Details}
We utilize the official implementations of all baseline methods provided either by the GADBench repository~\cite{GADBench} or by the original authors. 
All experiments are conducted on a Linux server equipped with an Intel(R) Xeon(R) Gold 6248 CPU running at 2.50GHz and a 32GB NVIDIA Tesla V100 GPU. 
In the preprocessing stage, the Chebyshev polynomial order $K$ is selected from $\{2, 3, 4, 5\}$. 
For subgraph extraction, we adopt 1-hop Rayleigh Quotient-guided subgraphs to balance performance and computational efficiency. 
During mini-batch training, the model is trained for up to 1000 epochs using the Adam optimizer~\cite{Adam}, with a patience of 50. 
Hyperparameters are tuned as follows: 
learning rate $\in \{0.1, 0.01, 0.001\}$, 
weight decay $\in \{0.0, 0.01, 0.0001\}$, 
hidden dimension $\in \{ 32, 64, 128 \}$, 
dropout rate $\in \{0.0, 0.1, 0.2, 0.3, 0.4, 0.5\}$, 
MLP depth $\in \{1, 2, 3\}$, 
activation function $\in \{ \text{ReLU}, \text{ELU}, \text{Tanh}, \text{Identity} \}$, 
and normalization $\in \{ \text{none}, \text{batch}, \text{layer} \}$. 
The fusion-related hyperparameters $p_n$ and $p_a$ are varied within the range of 0.0 to 1.0. 
Our code is publicly available at \url{https://github.com/Cloudy1225/SAGAD}.

\end{document}

%% file: tables/auprc.tex
\begin{table*}[t]
\centering
\caption{Comparison of AUPRC for each model. "-" denotes "out of memory".} \label{Tab: AUPRC}
\begin{tabular}{l|cccccccccc|c}\toprule
Model & Reddit   & Weibo   & Amazon   & Yelp.   & T-Fin.   & Ellip.   & Tolo.   & Quest.   & DGraph.   & T-Social   & Avg.\\\midrule
GCN & 4.2±0.8 & \underline{86.0±6.7} & 32.8±1.2 & 16.4±2.6 & 60.5±10.8 & 43.1±4.6 & 33.0±3.6 & 6.1±0.9 & 2.3±0.2 & 8.4±3.8 & 29.3 \\
GIN & 4.3±0.6 & 67.6±7.4 & 75.4±4.3 & 23.7±5.4 & 44.8±7.1 & 40.1±3.2 & 31.8±3.2 & 6.7±1.1 & 2.0±0.1 & 6.2±1.7 & 30.3 \\
SAGE & 4.5±0.6 & 58.5±6.2 & 42.5±6.1 & 20.9±3.5 & 11.7±5.2 & 43.1±5.6 & 34.0±2.1 & 5.5±1.3 & 2.0±0.2 & 7.8±1.3 & 23.0 \\
GAT & 4.7±0.7 & 73.3±7.3 & 81.6±1.7 & 25.0±2.9 & 28.9±8.6 & 44.2±6.6 & 33.0±2.0 & 7.3±1.2 & 2.2±0.2 & 9.2±2.0 & 30.9 \\
\midrule
H2GCN & 4.6±0.7 & 76.5±5.8 & 80.9±2.6 & 24.4±3.7 & 34.2±10.1 & 46.8±8.1 & 33.6±2.2 & 7.4±1.3 & 2.4±0.3 & 10.4±1.9 & 32.1 \\
ACM & 4.4±0.7 & 66.0±8.7 & 54.0±19.0 & 21.4±2.7 & 29.2±16.8 & 55.1±4.8 & 34.4±3.5 & 7.2±1.9 & 2.2±0.4 & 6.0±1.6 & 28.0 \\
FAGCN & 4.7±0.7 & 70.1±10.6 & 77.0±2.3 & 22.5±2.6 & 39.8±27.2 & 43.6±10.6 & 35.0±4.3 & 7.3±1.4 & 2.0±0.3 & - & - \\
AdaGNN & \underline{4.9±0.8} & 28.3±2.8 & 75.7±6.3 & 22.7±2.1 & 23.3±7.6 & 39.2±7.9 & 32.2±3.9 & 5.3±0.9 & 2.1±0.3 & 4.8±1.1 & 23.9 \\
BernNet & \underline{4.9±0.3} & 66.6±5.5 & 81.2±2.4 & 23.9±2.7 & 51.8±12.4 & 40.0±4.1 & 28.9±3.5 & 6.7±2.1 & \underline{2.5±0.2} & 4.2±1.2 & 31.1 \\
\midrule
GAS & 4.7±0.7 & 65.7±8.4 & 80.7±1.7 & 21.7±3.3 & 45.7±13.4 & 46.0±4.9 & 31.7±3.0 & 6.3±2.0 & \underline{2.5±0.2} & 8.6±2.4 & 31.4 \\
DCI & 4.3±0.4 & 76.2±4.3 & 72.5±7.9 & 24.0±4.8 & 51.0±7.2 & 43.4±4.9 & 32.1±4.2 & 6.1±1.3 & 2.0±0.2 & 7.4±2.5 & 31.9 \\
PCGNN & 3.4±0.5 & 69.3±9.7 & 81.9±1.9 & 25.0±3.5 & 58.1±11.3 & 40.3±6.6 & 33.9±1.7 & 6.4±1.8 & 2.4±0.4 & 8.0±1.6 & 32.9 \\
AMNet & \underline{4.9±0.4} & 67.1±5.1 & 82.4±2.2 & 23.9±3.5 & 60.2±8.2 & 33.3±4.8 & 28.6±1.5 & 7.4±1.4 & 2.2±0.3 & 3.1±0.3 & 31.3 \\
BWGNN & 4.2±0.7 & 80.6±4.7 & 81.7±2.2 & 23.7±2.9 & 60.9±13.8 & 43.4±5.5 & 35.3±2.2 & 6.5±1.7 & 2.1±0.3 & 15.9±6.2 & 35.4 \\
GHRN & 4.2±0.6 & 77.0±6.2 & 80.7±1.7 & 23.8±2.8 & 63.4±10.4 & 44.2±5.7 & \underline{35.9±2.0} & 6.5±1.7 & 2.3±0.3 & 16.2±4.6 & 35.4 \\
XGBGraph & 4.1±0.5 & 75.9±6.2 & \underline{84.4±1.1} & 24.8±3.1 & 78.3±3.1 & \textbf{77.2±3.2} & 34.1±2.8 & 7.7±2.1 & 1.9±0.2 & 40.6±7.6 & \underline{42.9} \\
ConsisGAD & 4.5±0.5 & 64.6±5.5 & 78.7±5.7 & \underline{25.9±2.9} & 79.7±4.7 & 47.8±8.2 & 33.7±2.7 & \underline{7.9±2.4} & 2.0±0.2 & 41.3±5.0 & 38.6 \\
SpaceGNN & 4.6±0.5 & 79.2±2.8 & 81.1±2.3 & 25.7±2.4 & \underline{81.0±3.5} & 44.1±3.5 & 33.8±2.5 & 7.4±1.6 & 2.0±0.3 & \underline{59.0±5.7} & 41.8 \\
\midrule
\textbf{SAGAD} & \textbf{5.6±0.6} & \textbf{94.8±0.7} & \textbf{84.7±2.0} & \textbf{27.8±2.4} & \textbf{82.8±1.0} & \underline{57.1±4.1} & \textbf{40.5±1.6} & \textbf{11.0±1.4} & \textbf{2.6±0.3} & \textbf{71.8±8.3} & \textbf{47.9} \\
\bottomrule
\end{tabular}
\end{table*}

%% file: tables/scalability.tex
\begin{table}[ht]
    \centering
    \setlength{\tabcolsep}{3pt}
    \caption{
    Scalability comparison in terms of training time and GPU memory.
    }
    \label{Tab: Scalability Comparison}
    \begin{tabular}{lcccc}
        \toprule
        \multirow{2}{*}{Model} & \multicolumn{2}{c}{YelpChi} & \multicolumn{2}{c}{T-Social} \\
        \cmidrule(lr){2-3} \cmidrule(lr){4-5}
                               & Time (s) & Mem. (MB)        & Time (s) & Mem. (MB) \\
        \midrule
        GCN        & 1.93  & 547.73   & 61.75   & 13241.26 \\
        AMNet      & 4.09  & 773.38   & 213.82  & 18970.04 \\
        BWGNN      & 2.66  & 729.17   & 112.01  & 16146.46 \\
        GHRN       & 3.22  & 3360.71  & 161.00  & 28080.33 \\
        ConsisGAD  & 89.28 & 16390.42 & 1674.87 & 14145.61 \\
        SpaceGNN   & 13.80 & 24217.29 & 1273.43 & 20518.91 \\
        \midrule
        \textbf{SAGAD}      & \textbf{1.52}  & \textbf{74.72}    & \textbf{13.76}   & \textbf{1455.50}  \\
        \bottomrule
    \end{tabular}
\end{table}

%% file: tables/ablation_auprc.tex
\begin{table*}[t]
  \centering
  \caption{Ablation study on each component of our SAGAD in terms of AUPRC and AUROC.}
  \label{Tab: Ablation Study AUPRC AUROC}
  \resizebox{1.\textwidth}{!}{
  \begin{tabular}{ccccc|cc|cc|cc|cc|cc|cc|cc}
  \toprule
    \multirow{2}{*}{$\boldsymbol{Z}_L$} & \multirow{2}{*}{$\boldsymbol{Z}_H$} & \multirow{2}{*}{$\mathcal{G}^{RQ}$} & \multirow{2}{*}{$\mathcal{L}_{fpg}$} & \multirow{2}{*}{Fusion} 
    & \multicolumn{2}{c|}{Reddit} 
    & \multicolumn{2}{c|}{Amazon} 
    & \multicolumn{2}{c|}{Yelp.} 
    & \multicolumn{2}{c|}{T-Fin.} 
    & \multicolumn{2}{c|}{Ellip.} 
    & \multicolumn{2}{c|}{Tolo.} 
    & \multicolumn{2}{c}{DGraph.} \\ 
    \cmidrule(lr){6-7}\cmidrule(lr){8-9}\cmidrule(lr){10-11}\cmidrule(lr){12-13}
    \cmidrule(lr){14-15}\cmidrule(lr){16-17}\cmidrule(lr){18-19}
     & & & & & AUPRC & AUROC & AUPRC & AUROC & AUPRC & AUROC & AUPRC & AUROC & AUPRC & AUROC & AUPRC & AUROC & AUPRC & AUROC \\
  \midrule
    \ding{51} & \ding{51} & \ding{51} & \ding{51} & Eq. (7) 
      & \textbf{5.6±0.6} & \textbf{66.5±3.0}
      & \textbf{84.7±2.0} & \textbf{94.5±2.1}
      & \textbf{27.8±2.4} & \textbf{68.8±2.5}
      & \textbf{82.8±1.0} & \textbf{94.4±0.4}
      & \textbf{57.1±4.1} & \textbf{90.9±0.9}
      & \textbf{40.5±1.6} & \textbf{73.6±0.8}
      & \textbf{2.6±0.3} & \textbf{70.8±1.9} \\
    \ding{51} & \ding{51} & \ding{51} & \ding{55} & Eq. (7) 
      & 5.3±0.7 & 63.9±3.6
      & 83.8±2.4 & 94.3±1.9
      & 26.6±1.7 & 68.1±2.5
      & 78.1±6.1 & 93.2±2.1
      & 56.6±6.0 & 90.5±1.1
      & 35.0±2.0 & 69.1±1.8
      & 2.3±0.2 & 69.6±1.3 \\
    \ding{51} & \ding{51} & \ding{66} & \ding{51} & Eq. (7) 
      & 5.5±0.7 & 65.6±2.9
      & 83.0±2.3 & 93.8±2.0
      & 23.1±2.8 & 63.9±4.3
      & 78.1±4.3 & 93.2±1.3
      & 51.9±3.9 & 90.1±1.2
      & 33.7±1.3 & 68.2±1.7
      & 2.2±0.2 & 69.0±1.3 \\
    \ding{51} & \ding{51} & \ding{55} & \ding{51} & Eq. (7) 
      & 5.2±0.7 & 63.8±3.7
      & 83.2±2.0 & 93.9±2.1
      & 23.1±2.8 & 63.8±4.3
      & 79.1±3.6 & 93.3±1.3
      & 51.7±3.3 & 90.2±1.2
      & 34.0±1.4 & 68.8±1.2
      & 2.2±0.2 & 69.1±1.5 \\
    \ding{51} & \ding{51} & \ding{55} & \ding{55} & Mean 
      & 5.1±0.7 & 66.0±2.4
      & 82.3±3.3 & 93.9±2.0
      & 23.1±3.0 & 64.1±4.8
      & 67.1±0.9 & 88.2±1.3
      & 52.5±3.8 & 90.4±0.7
      & 33.2±1.2 & 68.3±1.4
      & 2.2±0.2 & 68.8±1.4 \\
    \ding{51} & \ding{51} & \ding{55} & \ding{55} & Concat 
      & 4.9±0.8 & 61.3±5.0
      & 79.8±9.5 & 93.6±2.7
      & 23.7±3.3 & 64.0±4.1
      & 78.7±3.7 & 93.6±0.9
      & 53.4±6.1 & 89.9±1.3
      & 37.4±1.6 & 70.2±1.8
      & 2.4±0.3 & 70.0±1.9 \\
    \ding{51} & \ding{51} & \ding{55} & \ding{55} & Atten. 
      & 4.9±0.7 & 63.2±3.5
      & 84.3±1.4 & 94.4±2.3
      & 23.1±3.1 & 63.7±4.4
      & 69.8±5.8 & 91.8±1.7
      & 37.7±8.0 & 85.7±1.8
      & 35.9±3.0 & 69.7±1.9
      & 2.4±0.3 & 70.0±2.0 \\
    \ding{51} & \ding{55} & \ding{55} & \ding{55} & - 
      & 4.2±0.7 & 58.0±6.3
      & 32.0±1.8 & 81.8±0.7
      & 16.6±1.3 & 52.8±1.8
      & 77.1±1.8 & 92.4±0.6
      & 45.3±4.0 & 89.1±1.2
      & 38.0±2.1 & 71.0±1.1
      & 2.1±0.3 & 65.1±2.1 \\
    \ding{55} & \ding{51} & \ding{55} & \ding{55} & - 
      & 4.9±0.7 & 60.3±2.9
      & 81.5±3.9 & 93.1±2.1
      & 22.6±2.8 & 63.0±4.0
      & 22.0±3.9 & 78.0±2.9
      & 40.6±4.0 & 85.4±1.1
      & 32.1±2.9 & 65.9±2.6
      & 2.3±0.1 & 69.4±1.0 \\
  \bottomrule
  \end{tabular}
  }
\end{table*}

%% file: tables/dataset.tex
\begin{table*}[h]
    \begin{center}
    {\caption{Dataset statistics including the number of nodes and edges, the node feature dimension, the ratio of anomalies, the homophily ratio, the concept of relations, and the type of node features. $h$, $h^a$, and $h^n$ represent the edge homophily, the average homophily for abnormal nodes, and the average homophily for normal nodes, respectively. \emph{As we can see, $h^a$ is much smaller than $h^n$, indicating that abnormal nodes exhibit significantly lower homophily compared to normal nodes.} "Misc." refers to node features that are heterogeneous attributes, which may include categorical, numerical, and temporal information.}\label{Tab: Dataset statistics}}
    \resizebox{1.\textwidth}{!}{
    \begin{tabular}{lccccccccc}
    \toprule
    Dataset & \#Nodes & \#Edges & \#Feat. & Anomaly & $h$ & $h^a$ & $h^n$ & Relation Concept & Feature Type \\
    \midrule 
    Reddit & 10,984 & 168,016 & 64 & 3.3\% & 0.947 & 0.000 & 0.994 & Under Same Post & Text Embedding \\
    Weibo & 8,405 & 407,963 & 400 & 10.3\% & 0.977 & 0.858 & 0.977 & Under Same Hashtag & Text Embedding \\
    Amazon & 11,944 & 4,398,392 & 25 & 9.5\% & 0.954 & 0.102 & 0.968 & Review Correlation & Misc. Information \\
    YelpChi & 45,954 & 3,846,979 & 32 & 14.5\% & 0.773 & 0.195 & 0.867 & Reviewer Interaction & Misc. Information \\
    T-Finance & 39,357 & 21,222,543 & 10 & 4.6\% & 0.971 & 0.543 & 0.976 & Transaction Record & Misc. Information \\
    Elliptic & 203,769 & 234,355 & 166 & 9.8\% & 0.970 & 0.234 & 0.985 & Payment Flow & Misc. Information \\
    Tolokers & 11,758 & 519,000 & 10 & 21.8\% & 0.595 & 0.476 & 0.679 & Work Collaboration & Misc. Information \\
    Questions & 48,921 & 153,540 & 301 & 3.0\% & 0.840 & 0.111 & 0.922 & Question Answering & Text Embedding \\
    DGraph-Fin & 3,700,550 & 4,300,999 & 17 & 1.3\% & 0.993 & 0.013 & 0.997 & Loan Guarantor & Misc. Information \\
    T-Social & 5,781,065 & 73,105,508 & 10 & 3.0\% & 0.624 & 0.174 & 0.900 & Social Friendship & Misc. Information \\
    \bottomrule
    \end{tabular}
    }
    \end{center}
\end{table*}

%% file: tables/auroc.tex
\begin{table*}[t]
\centering
\caption{Comparison of AUROC for each model. "-" denotes "out of memory".}\label{Tab: AUROC}
\begin{tabular}{l|cccccccccc|c}\toprule
Model & Reddit   & Weibo   & Amazon   & Yelp.   & T-Fin.   & Ellip.   & Tolo.   & Quest.   & DGraph.   & T-Social   & Avg.\\\midrule
GCN & 56.9±5.9 & 93.5±6.6 & 82.0±0.3 & 51.2±3.7 & 88.3±2.5 & 86.2±1.9 & 64.2±4.8 & 60.0±2.2 & 66.2±2.5 & 71.6±10.4 & 72.0 \\
GIN & 60.0±4.1 & 83.8±8.3 & 91.6±1.7 & 62.9±7.3 & 84.5±4.5 & 88.2±0.9 & 66.8±5.2 & 62.2±2.2 & 65.7±1.8 & 70.4±7.4 & 73.6 \\
SAGE & 60.3±3.8 & 81.8±6.5 & 81.4±2.0 & 58.9±4.3 & 68.9±5.5 & 87.4±1.0 & 67.6±4.2 & 61.2±2.9 & 64.8±3.3 & 72.0±2.9 & 70.4 \\
GAT & 60.5±3.9 & 86.4±7.7 & 92.4±1.9 & 65.6±4.0 & 85.0±4.5 & 88.5±2.1 & 68.1±3.0 & 62.3±1.4 & 67.2±1.9 & 75.4±4.8 & 75.1 \\
\midrule
H2GCN & 60.3±4.4 & 87.8±5.7 & 91.4±2.4 & 65.0±4.5 & 86.3±3.9 & 89.3±2.1 & 69.1±2.5 & 61.9±4.2 & 69.0±1.9 & 74.5±3.7 & 75.5 \\
ACM & 60.0±4.3 & 92.5±2.9 & 81.8±7.9 & 61.3±3.8 & 82.2±8.1 & 90.0±0.8 & \underline{69.3±4.2} & 60.8±4.1 & 67.6±5.3 & 68.3±4.9 & 73.3 \\
FAGCN & 60.2±4.1 & 83.4±7.7 & 90.4±1.9 & 62.4±2.9 & 82.6±8.4 & 86.1±3.0 & 68.1±6.6 & 60.8±3.2 & 63.0±3.7 & - & - \\
AdaGNN & 62.0±4.7 & 69.5±4.8 & 90.8±2.2 & 63.2±3.1 & 83.6±2.6 & 85.1±2.8 & 63.3±5.3 & 58.5±4.1 & 67.6±3.7 & 64.7±5.6 & 70.8 \\
BernNet & \underline{63.1±1.7} & 80.1±6.9 & 92.1±2.4 & 65.0±3.7 & 91.2±1.0 & 87.0±1.7 & 61.9±5.6 & 61.8±6.4 & 69.0±1.4 & 59.8±6.3 & 73.1 \\
\midrule
GAS & 60.6±3.0 & 81.8±7.0 & 91.6±1.9 & 61.1±5.2 & 88.7±1.1 & 89.0±1.4 & 62.7±2.8 & 57.5±4.4 & \underline{69.9±2.0} & 72.1±8.8 & 73.5 \\
DCI & 61.0±3.1 & 89.3±5.3 & 89.4±3.0 & 64.1±5.3 & 88.0±3.2 & 88.5±1.3 & 67.6±7.1 & 62.2±2.5 & 65.3±2.3 & 74.2±3.3 & 75.0 \\
PCGNN & 52.8±3.4 & 83.9±8.1 & 93.2±1.2 & 65.1±4.8 & 92.0±1.1 & 87.5±1.4 & 67.4±2.1 & 59.0±4.0 & 68.4±4.2 & 69.1±2.4 & 73.8 \\
AMNet & 62.9±1.8 & 82.4±4.6 & 92.8±2.1 & 64.8±5.2 & 92.6±0.9 & 85.4±1.7 & 61.7±4.1 & 63.6±2.8 & 67.1±3.2 & 53.7±3.4 & 72.7 \\
BWGNN & 57.7±5.0 & 93.6±4.0 & 91.8±2.3 & 64.3±3.4 & 92.1±2.7 & 88.7±1.3 & 68.5±2.7 & 60.2±8.6 & 65.5±3.1 & 77.5±4.3 & 76.0 \\
GHRN & 57.5±4.5 & 91.6±4.4 & 90.9±1.9 & 64.5±3.1 & 92.6±0.7 & 89.0±1.3 & 69.0±2.2 & 60.5±8.7 & 67.1±3.0 & 78.7±3.0 & 76.1 \\
XGBGraph & 59.2±2.7 & \underline{96.4±0.7} & \textbf{94.7±0.9} & 64.0±3.5 & \textbf{94.8±0.6} & \textbf{91.9±1.3} & 67.5±3.4 & 61.4±2.9 & 62.4±4.1 & 85.2±1.8 & 77.8 \\
ConsisGAD & 59.6±2.8 & 85.0±3.7 & 92.3±2.2 & 66.1±3.8 & 94.3±0.8 & 88.6±1.3 & 68.5±2.0 & 65.7±3.9 & 67.1±3.0 & 93.1±1.9 & 78.0 \\
SpaceGNN & 62.3±1.9 & 94.4±0.9 & 91.1±2.5 & \underline{66.8±2.8} & 93.4±1.0 & 88.5±1.2 & 68.9±2.6 & \underline{66.0±1.8} & 63.9±3.7 & \underline{94.7±0.7} & \underline{79.0} \\
\midrule
\textbf{SAGAD} & \textbf{66.5±3.0} & \textbf{98.8±0.3} & \underline{94.5±2.1} & \textbf{68.8±2.5} & \underline{94.4±0.4} & \underline{90.9±0.9} & \textbf{73.6±0.8} & \textbf{69.0±2.8} & \textbf{70.8±1.9} & \textbf{95.3±0.7} & \textbf{82.3} \\
\bottomrule
\end{tabular}
\end{table*}

%% file: tables/reck.tex
\begin{table*}[t]
\centering
\caption{Comparison of Rec@K for each model. "-" denotes "out of memory".} \label{Tab: Rec@K}
\begin{tabular}{l|cccccccccc|c}\toprule
Model & Reddit   & Weibo   & Amazon   & Yelp.   & T-Fin.   & Ellip.   & Tolo.   & Quest.   & DGraph.   & T-Social   & Avg.\\\midrule
GCN & 6.2±2.2 & \underline{79.2±4.3} & 36.9±2.6 & 16.9±3.0 & 60.6±7.6 & 49.7±4.2 & 33.4±3.5 & 9.8±1.2 & 3.6±0.4 & 10.2±8.1 & 30.6 \\
GIN & 4.8±1.9 & 66.5±7.3 & 70.4±5.7 & 26.5±6.1 & 54.4±5.0 & 47.6±3.1 & 33.6±3.0 & 10.3±1.1 & 2.1±0.5 & 5.3±2.9 & 32.2 \\
SAGE & 5.8±0.8 & 63.4±6.0 & 48.0±5.6 & 22.9±3.6 & 18.5±9.4 & 48.2±5.8 & 35.2±2.2 & 8.8±2.5 & 2.5±0.7 & 9.5±2.9 & 26.3 \\
GAT & 6.5±2.3 & 70.2±4.6 & 77.1±1.7 & 28.1±3.4 & 36.2±10.3 & 51.4±5.8 & 35.1±1.8 & 10.9±0.9 & 3.1±0.7 & 11.6±3.0 & 33.0 \\
\midrule
H2GCN & 5.8±2.2 & 71.1±4.9 & 77.5±2.5 & 27.2±4.2 & 43.6±11.7 & 53.9±6.9 & 35.2±2.0 & 11.2±1.9 & 3.4±0.9 & 14.9±2.9 & 34.4 \\
ACM & 5.4±1.8 & 70.7±9.5 & 56.1±14.2 & 23.9±3.8 & 37.2±19.3 & 55.2±3.3 & 35.8±4.2 & 11.4±2.6 & 1.9±0.7 & 8.1±1.8 & 30.6 \\
FAGCN & \underline{7.2±1.9} & 67.8±8.1 & 71.7±3.1 & 25.0±2.8 & 39.6±30.3 & 48.5±11.3 & 35.6±3.7 & 12.3±2.3 & 2.5±0.8 & - & - \\
AdaGNN & 6.3±2.2 & 38.3±3.7 & 74.2±4.0 & 25.6±2.4 & 31.3±11.3 & 46.3±7.4 & 33.6±3.7 & 10.0±2.4 & 1.1±0.4 & 7.9±2.7 & 27.5 \\
BernNet & 6.4±1.5 & 60.9±4.6 & 77.2±2.1 & 26.8±3.1 & 60.5±11.1 & 47.0±4.5 & 30.1±3.8 & 10.3±2.7 & \textbf{3.8±0.6} & 3.3±2.8 & 32.6 \\
\midrule
GAS & 6.6±2.5 & 62.0±6.9 & 77.4±1.7 & 24.6±4.1 & 54.2±9.5 & 51.9±5.2 & 33.0±3.9 & 9.1±2.9 & 3.4±0.4 & 11.5±4.6 & 33.4 \\
DCI & 4.5±1.4 & 68.5±3.5 & 68.3±7.2 & 26.8±5.3 & 58.5±6.3 & 50.0±3.8 & 33.5±5.6 & 9.9±1.9 & 2.3±0.7 & 6.3±6.8 & 32.9 \\
PCGNN & 3.0±2.1 & 65.1±6.6 & 78.0±1.5 & 27.8±3.8 & 63.9±6.3 & 46.5±7.3 & 34.3±1.6 & 10.1±3.9 & \underline{3.7±1.0} & 13.5±3.1 & 34.6 \\
AMNet & 6.8±1.5 & 62.1±4.4 & 77.8±2.3 & 26.6±4.3 & 65.7±6.3 & 37.8±6.7 & 30.5±1.9 & 12.7±2.6 & 2.6±0.8 & 1.6±0.5 & 32.4 \\
BWGNN & 6.0±1.4 & 75.1±3.5 & 77.7±1.6 & 26.4±3.2 & 64.9±11.7 & 49.7±6.1 & 35.5±3.1 & 10.9±3.2 & 3.1±0.8 & 24.3±7.4 & 37.4 \\
GHRN & 6.3±1.5 & 72.4±2.6 & 77.7±1.3 & 26.9±3.1 & 67.7±4.3 & 50.8±4.8 & 36.1±3.1 & 11.1±3.4 & 3.4±0.7 & 24.6±7.0 & 37.7 \\
XGBGraph & 4.9±1.9 & 68.9±5.7 & \underline{78.2±1.5} & 26.8±3.0 & 72.4±3.8 & \textbf{68.9±3.7} & \underline{36.6±3.0} & 10.6±2.9 & 2.5±0.7 & 43.0±7.6 & 41.3 \\
ConsisGAD & 6.3±2.5 & 58.6±4.6 & 77.5±2.8 & 28.7±3.2 & 76.5±4.2 & 50.8±7.8 & 34.8±2.3 & \underline{12.8±3.1} & 1.8±0.5 & 48.5±4.6 & 39.6 \\
SpaceGNN & 6.0±2.0 & 72.2±3.9 & 76.8±2.0 & \underline{28.9±2.4} & \underline{76.6±3.7} & 48.6±4.9 & 35.4±2.5 & 11.5±2.2 & 2.3±0.8 & \underline{63.3±4.0} & \underline{42.2} \\
\midrule
\textbf{SAGAD} & \textbf{8.6±1.7} & \textbf{89.0±1.4} & \textbf{80.8±1.3} & \textbf{31.5±2.7} & \textbf{78.3±1.3} & \underline{59.4±2.7} & \textbf{40.2±1.9} & \textbf{14.5±1.3} & 3.6±1.0 & \textbf{73.5±3.7} & \textbf{47.9} \\
\bottomrule
\end{tabular}
\end{table*}